\newcommand{\piold}{\pi_{\theta_\text{old}}}
\newcommand{\pinew}{\pi_{\theta}}
\newcommand{\grpo}{\text{GRPO}}
\newcommand{\grpoclip}{\grpo\text{-clip}}
\newcommand{\grponoclip}{\grpo\text{-noclip}}
\newcommand{\pppo}{P3O}
\newcommand{\TOPR}{\text{TOPR}}
\newcommand{\topr}{\text{TOPR}}
\newcommand{\SAPO}{\text{SAPO}}
\newcommand{\sapo}{\text{SAPO}}
\newcommand{\ourmethod}{\text{PSPO}}
\newcommand{\ourmethodgr}{\text{GR-\ourmethod}}
\theoremstyle{plain}
\newtheorem{theorem}{Theorem}[section]
\newtheorem{proposition}[theorem]{Proposition}
\newtheorem{lemma}[theorem]{Lemma}
\newtheorem{corollary}[theorem]{Corollary}
\theoremstyle{definition}
\theoremstyle{remark}
\title{It’s Not You, It’s Clipping: A Soft Trust-Region via Probability Smoothing for LLM RL}
\author[1]{Madeleine Dwyer}
\author[1,2]{Adam Sobey}
\author[1]{Adriane Chapman}
\affil[1]{University of Southampton}
\affil[2]{The Alan Turing Institute}
\date{January 2026}
\begin{document}

\maketitle

% this must go after the closing bracket ] following \twocolumn[ ...

% This command actually creates the footnote in the first column listing the
% affiliations and the copyright notice. The command takes one argument, which
% is text to display at the start of the footnote. The \icmlEqualContribution
% command is standard text for equal contribution. Remove it (just {}) if you
% do not need this facility.

% Use ONE of the following lines. DO NOT remove the command.
% If you have no special notice, KEEP empty braces:
% \printAffiliationsAndNotice{}  % no special notice (required even if empty)
% Or, if applicable, use the standard equal contribution text:
% \printAffiliationsAndNotice{\icmlEqualContribution}

\begin{abstract}

Training large language models (LLMs) with reinforcement learning (RL) methods such as PPO and GRPO commonly relies on ratio clipping to stabilise updates. While effective at preventing instability, clipping discards information, introduces gradient discontinuities and can prevent exploration of better policies. Inspired by label smoothing, we propose Probability Smoothing Policy Optimisation (PSPO). PSPO smooths current policy probabilities toward the behaviour policy before computing importance ratios, creating a soft trust region that preserves gradients while preventing destabilising updates. Unlike prior soft clipping approaches that use sigmoid-based transformations which can suffer from vanishing gradients and saturation, our method uses a linear interpolation, providing simpler and more robust gradient preservation.

Empirically, GR-PSPO outperforms clipping and sigmoid-based alternatives on mathematical reasoning benchmarks when refining models with prior domain knowledge, achieving an accuracy of 79.9\% on GSM8K and 59.6\% on MATH for Qwen2-Math-1.5B.

\end{abstract}

\section{Introduction}

Reinforcement learning (RL) is now a central component of large language model (LLM) fine-tuning. RL is commonly used after supervised fine-tuning (SFT)~\citep{ouyang_training_2022}, where the aim is to refine an existing policy whilst preserving prior knowledge. In this setting, the main challenge is to make improvements without degrading existing knowledge, as unstable, destructive updates can waste expensive training.
RL post-training has shown enhanced reasoning and can maintain context in long and complex conversations. This is especially important with the prevalence of Small Language Models, that due to their size are energy efficient and easier to control in a workflow as they can be trained and maintained by smaller companies but where these models in particular struggle to provide outputs that are understandable by a human user. 

%Proximal Policy Optimization (PPO;~\citealp{schulman_proximal_2017}) while Group Relative Policy Optimization (GRPO) adapts PPO specifically  for LLMs a

To stabilise updates, policy optimisation methods rely on constraints that limit deviation from a reference policy. In practice, this requires a balance between learning speed and stability, as overly conservative updates can make learning inefficient or infeasible. Trust Region Policy Optimisation (TRPO)~\citep{schulman_trust_2017} constrains updates using the KL divergence, which allows for larger steps but is still computationally inefficient. Proximal Policy Optimization (PPO)~\citep{schulman_proximal_2017} provides an empirically stronger regularisation by using clipped probability ratios as a first-order approximation of the KL divergence. PPO clips the importance sampling ratio to effectively reduce the step size when the policy is outside the trust region, and underpins systems such as WebGPT~\citep{nakano_webgpt_2021}, LLaMA-2 Chat~\citep{touvron_llama_2023}, and Sparrow~\citep{glaese_improving_nodate}. 

This is extended in Group Relative Policy Optimization (GRPO) which is an adaptation of PPO specifically for LLMs ~\citep{shao_deepseekmath_2024}, inheriting the clipped probability, and has been applied to mathematical reasoning tasks~\citep{shao_deepseekmath_2024}, alongside other RL approaches~\citep{luong_reft_2024,mitra_motif_2025,luo_wizardmath_2025,zheng_group_2025}. Some implementations of GRPO~\citep{trl_grpo_docs} default to using a single pass over data, effectively avoiding clipping by only using on-policy data. This leads to the importance sampling ratio always being 1, which essentially reverts the approach back to a vanilla policy gradient method, and as such, will typically require small steps and be sample inefficient.

However, ratio clipping has drawbacks, namely vanishing gradients when the policy ratio leaves the clip range. Additionally, clipping can miss better policies outside of the clipped policy space~\citep{chen_sufficiency_2023}, especially in problems where greater exploration might be beneficial. Alternatives to clipping have therefore emerged, predominantly focusing on smooth transforms to remove the brittleness of hard clipping. P3O~\citep{chen_sufficiency_2023} introduced sigmoid-based soft clipping through its SCOPIC objective to enable exploration beyond the trust region defined by typical hard clip boundaries. P3O was empirically shown to perform better than PPO on five deep RL gym environments, but has not been applied to GRPO or used for LLM fine-tuning prior to this work. Sigmoid methods can, however, lead to saturation of values at extreme ratios, reintroducing vanishing gradients in precisely the regions where clipping also fails. Recent work has addressed the asymmetric behaviour of positive versus negative advantages in LLM fine-tuning specifically, Tapered Off-Policy REINFORCE (TOPR)~\citep{roux_tapered_2025} employs asymmetric importance sampling to differentially weight positive and negative advantages showing better performance on GSM8K compared with PPO, DPO and REINFORCE on 8B parameter models. Soft Adaptive Policy Optimization (SAPO)~\citep{gao_soft_2025} combines sigmoid soft gating with asymmetric control, similar to ideas from both P3O and TOPR, showing an increase in performance compared with GRPO and GSPO on mathematical tasks on 30B parameter models. Asymmetric methods require at least one additional parameter to be tuned, and TOPR's performance seems to be quite dependent on the ratio of positive to negative samples in the training, which is not always known a priori or easy to control especially given that it will change across training. Other approaches include KL early stopping~\citep{sun_you_2022}, though these can be brittle or saturating, particularly in complex settings.

We propose \emph{Probability Smoothing Policy Optimisation} (\ourmethod{}), as an alternative to clipping. Instead of truncating ratios, we smooth the current policy’s probabilities toward the old behaviour policy before computing the importance ratio. This is inspired by label smoothing in supervised learning. This smoothing reduces overconfidence in any single action while retaining informative gradients everywhere. 

We empirically evaluate \ourmethod{} by instantiating it within GRPO~\citep{shao_deepseekmath_2024} (\ourmethodgr{}) across three model types: Qwen2.5-0.5B, Qwen2.5-1.5B, and Qwen2-Math-1.5B. This experimental design allows us to test our hypothesis that \ourmethod{}'s advantages emerge when stability is required, such as with models with prior domain knowledge (the Math model). 

%Results validate this prediction: on the model already pretrained with mathematics, \ourmethodgr{} wins on all four benchmarks (GSM8K~\citep{cobbe_training_2021}, ASDiv~\citep{miao_diverse_2020}, SVAMP~\citep{patel_are_2021}, MATH~\citep{lightman_lets_2023}), while on base models the advantages are most pronounced on the hardest tasks. These findings demonstrate that probability smoothing excels at refining models with prior base knowledge you want to retain.

\section{Probability Smoothing Policy Optimisation}

Policy gradient methods optimise the expected reward by updating the policy $\pinew$ with respect to sampled trajectories. To effectively reuse trajectories from an old policy $\piold$, any update is regularised using importance sampling. Importance sampling estimates how likely the (state $s_t$, action $a_t$) pair would occur given the current policy. In PPO~\citep{schulman_proximal_2017} and GRPO~\cite{shao_deepseekmath_2024}, this is approximated with a ratio of the current policy $\pinew$ and the old, behaviour policy $\piold$ which generated the trajectory. This ratio is defined in equation ~\ref{eq:ratio},
\begin{equation}
r_t(\theta) = \frac{\pinew(a_t \mid s_t)}{\piold(a_t \mid s_t)}.
\label{eq:ratio}
\end{equation}

\textbf{GRPO} is an adaptation of PPO, ~\citep{shao_deepseekmath_2024}, which removes the need for a critic model, reducing the amount of training resources and developed specifically for LLMs. GRPO samples a group $G$ of outputs $a$ for a given prompt $s \in S$, and uses the group scores $r$ as a baseline estimate to then calculate the advantage $\hat{A}$ using the relative rewards based on the current group baseline; $\hat{A}_{t,i} = R_{t,i} - \Bar{R}_{t,i}$. GRPO includes the same clipping principle as PPO in its surrogate objective, although some default implementations suggest that using GRPO with only 1 iteration over the data gives comparable performance and negates the effect of clipping. GRPO aims to maximise the objective function:

\begin{equation}
\begin{split}
J^{\text{GRPO}}(\theta) = 
\mathbb{E}_t \Bigg[
\frac{1}{G} \sum_{i=1}^{G} \Bigg\{
\min \Big( r_{t,i}(\theta) \hat{A}_{t,i}, \, \text{clip}(r_{t,i}(\theta), 1-\varepsilon, 1+\varepsilon) \hat{A}_{t,i} \Big) - \beta\mathbb{D}_{KL}[\pinew \mid \pi_{ref}] \Bigg\}\Bigg],
\label{eq:grpo}
\end{split}
\end{equation}

where $\mathbb{D}_{KL}[\pinew \mid \pi_{ref}]$ is an estimate of the KL divergence from the current policy $\pinew$ to a reference policy $\pi_{ref}$, and $\beta$ is a hyper-parameter which controls the strength of this penalty. This KL divergence is similar to that used by TRPO, but in GRPO it is used as a soft penalty of $\pinew$ to $\pi_{ref}$ compared with TRPO's hard constraint of $\piold$ to $\pinew$. In some popular implementations~\citep{trl_grpo_docs}, $\beta$ is set to 0 as it reduces memory usage and improves the training speed by not needing to load the reference model.

In complex RL problems, there is often multiple optimal actions. Language generation tasks demonstrate this excellently, as within language, there are typically many possible words (actions) that can represent the same meaning (achieve the same goal).

To reduce overconfidence in any single action in a given state, we took inspiration from the label smoothing regularisation method used in supervised learning~\citep{szegedy_rethinking_2016}. Label smoothing has been shown to reduce overconfidence and improve the robustness of a model~\citep{muller_when_nodate}~\citep{goibert_adversarial_2019}.
Label smoothing, equation ~\ref{eq:label_smoothing}, moves from one-hot encoded target distribution $\varphi(k \mid x)$ to soft targets $\tilde{\varphi}(k \mid x)$ that are a weighted average of the hard target distribution and another distribution, traditionally the uniform distribution $u(k)$~\citep{szegedy_rethinking_2016}, \footnote{In the original label smoothing paper, $\varepsilon$ is used as the smoothing parameter, we use $\alpha$ to avoid confusion with the clipping range, often denoted as $\varepsilon$.}

\begin{equation}
\tilde{\varphi}(k \mid x) = (1 - \alpha) \cdot \varphi(k \mid x) + \alpha \cdot u(k),
\label{eq:label_smoothing}
\end{equation}
where $\alpha \in [0, 1]$ controls the smoothing strength.

We apply \eqref{eq:label_smoothing} to the current policy probability in equation ~\ref{eq:smoothed_prob_step},
\begin{equation}
\tilde{\pinew}(a_t \mid s_t) = (1-\alpha)\pinew(a_t \mid s_t) + \alpha \cdot q(a_t \mid s_t),
\label{eq:smoothed_prob_step}
\end{equation}
where $q(\cdot)$ represents the distribution we want to smooth towards. 

For policy optimisation, updates should be within a trust region to enable larger step updates. Therefore, we decided to smooth towards the old behaviour policy, $q = \piold$, rather than the uniform distribution, so the smoothing behaves as a behaviour-anchored trust region. ~\citet{szegedy_rethinking_2016} noted that the change in the loss when using label smoothing could be equivalently considered as an estimate of the KL divergence. Given that PPO used the probability ratio clipping as a first-order estimate of KL-divergence for regularisation, this bolsters our intuition to smooth towards the old policy, and by doing so, we introduce an equivalent estimate of the KL-divergence with the smoothed action probability and create a soft trust region. The smoothed probability becomes equation ~\ref{eq:smoothed_prob},
\begin{equation}
\tilde{\pinew}(a_t \mid s_t) = (1-\alpha)\pinew(a_t \mid s_t) + \alpha \cdot \piold(a_t \mid s_t).
\label{eq:smoothed_prob}
\end{equation}

If we then consider the ratio equation, we can find the effect from the smoothed probability on the ratio using equation ~\ref{eq:smoothed_prob},
\begin{equation}
    \tilde{r}_t(\theta) = \frac{\tilde{\pinew}(a_t \mid s_t)}{{\piold}(a_t \mid s_t)},
\label{eq:smoothed_ratio}
\end{equation}
which given \eqref{eq:smoothed_prob}, becomes equation~\ref{eq:smoothed_ratio_full},

\begin{equation}
\begin{split}
\tilde{r}_t(\theta) = \frac{(1-\alpha)\pinew(a_t \mid s_t) + \alpha \cdot \piold(a_t \mid s_t)}{\piold(a_t \mid s_t)} \\ = (1-\alpha)r_t + \alpha.
\label{eq:smoothed_ratio_full}
\end{split}
\end{equation}

Having derived the smoothed ratio (Eq.~\ref{eq:smoothed_ratio_full}) we now establish formally that this simple linear transformation provides the properties we intuitively assumed: (i) contraction toward the behaviour policy in both total variation and KL divergence, ensuring updates remain within a trust region; (ii) preservation of non-zero gradients everywhere, avoiding the learning signal loss caused by clipping; and (iii) regularisation against overconfidence in any single action. We state these results as formal propositions.

\paragraph{Soft Trust Region - Implicit divergence control from probability smoothing.}
Given the smoothed policy and ratio (Eqs.~(~\ref{eq:smoothed_prob_step})–(~\ref{eq:smoothed_ratio_full})), the linear interpolation, $\tilde{r}_t(\theta) = (1-\alpha)r_t + \alpha$, yields a contraction around $r=1$ and induces a soft trust region anchored at $\piold$, consistent with our intuition.

\begin{lemma}[Total variation (TV) contraction]
\label{lem:tv}
For any state $s$ and $\alpha\in[0,1]$,
\[
\|\tilde\pinew(\cdot\mid s)-\piold(\cdot\mid s)\|_{1}
=(1-\alpha)\,\|\pinew(\cdot\mid s)-\piold(\cdot\mid s)\|_{1}.
\]
\end{lemma}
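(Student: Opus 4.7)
The plan is to prove this identity by a direct pointwise computation: I will substitute the definition of $\tilde\pinew$ from \eqref{eq:smoothed_prob} into the difference $\tilde\pinew(a\mid s)-\piold(a\mid s)$, factor out $(1-\alpha)$, and then sum absolute values over actions.

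First I would fix a state $s$ and an action $a$ and write
\[
\tilde\pinew(a\mid s)-\piold(a\mid s)
=(1-\alpha)\pinew(a\mid s)+\alpha\,\piold(a\mid s)-\piold(a\mid s)
=(1-\alpha)\bigl(\pinew(a\mid s)-\piold(a\mid s)\bigr),
\]
which exhibits $\tilde\pinew-\piold$ as a scalar multiple of $\pinew-\piold$ pointwise. Second, since $\alpha\in[0,1]$ we have $1-\alpha\geq 0$, so $|(1-\alpha)x|=(1-\alpha)|x|$ for any real $x$. Summing over the action space then gives
\[
\sum_a\bigl|\tilde\pinew(a\mid s)-\piold(a\mid s)\bigr|
=(1-\alpha)\sum_a\bigl|\pinew(a\mid s)-\piold(a\mid s)\bigr|,
\]
which is exactly the claimed identity in the $\ell_1$ norm.

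There is no genuine obstacle: the statement is an immediate consequence of the affine form of the smoothing in \eqref{eq:smoothed_prob} together with non-negativity of $1-\alpha$. The only care needed is to note that $\tilde\pinew$ remains a valid probability distribution (a convex combination of two distributions), so the $\ell_1$ norm on the left is well-defined and coincides with twice the total variation distance, matching the interpretation of the lemma name. If desired, the same argument goes through verbatim with integrals in place of sums for continuous action spaces.
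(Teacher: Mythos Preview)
Your proof is correct and follows exactly the same approach as the paper: compute $\tilde\pinew-\piold=(1-\alpha)(\pinew-\piold)$ pointwise, then take $\ell_1$ norms. You simply spell out the absolute-homogeneity step (using $1-\alpha\ge 0$) and the summation more explicitly than the paper does.
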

\begin{proof}
Since $\tilde\pinew-\piold=(1-\alpha)(\pinew-\piold)$ pointwise, linearity of the $\ell_1$ norm gives the result directly.
\end{proof}

This means $\alpha$ directly controls the maximum distance the smoothed policy can drift from the behaviour policy: setting $\alpha = 0.1$ guarantees the smoothed policy is at most 90\% as far from $\piold$ as the unsmoothed policy would be, in total variation distance. This provides a tunable constraint on policy updates without hard truncation.

\begin{corollary}[KL upper bounds shrink under smoothing]
\label{cor:kl}
We use the joint convexity of KL and
set $\lambda = 1 - \alpha$, $P_1 = \pinew$, $P_2 = \piold$, $Q_1 = \piold$, $Q_2 = \piold$. This gives us:
\begin{equation*}
\begin{split}
\lambda P_1 + (1 - \lambda)P_2 = (1-\alpha)\pinew + \alpha\piold = \tilde{\pinew},
\\
\lambda Q_1 + (1 - \lambda)Q_2 = (1 - \alpha)\piold + \alpha\piold = \piold
\end{split}
\end{equation*}
Given that $D_{\mathrm{KL}}\!\big(\piold\|\ \piold\big) = 0$, we then find:
\[
D_{\mathrm{KL}}\!\big(\tilde\pinew\,\|\,\piold\big)
\;\le\;
(1-\alpha)\,D_{\mathrm{KL}}\!\big(\pinew\,\|\,\piold\big)
\]
Similarly for the reverse direction we find:
\[
D_{\mathrm{KL}}\!\big(\piold\,\|\,\tilde\pinew\big)
\;\le\;
(1-\alpha)\,D_{\mathrm{KL}}\!\big(\piold\,\|\,\pinew\big).
\]
Hence $\alpha$ directly sets a \emph{soft trust-region} radius in both total variation (TV) and (upper-bounded) KL.
\end{corollary}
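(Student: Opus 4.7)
The plan is to derive both inequalities from the joint convexity of the KL divergence, which says that $(P, Q) \mapsto D_{\mathrm{KL}}(P \| Q)$ is convex in the pair, i.e.\
\[
D_{\mathrm{KL}}\bigl(\lambda P_1 + (1-\lambda) P_2 \,\big\|\, \lambda Q_1 + (1-\lambda) Q_2\bigr) \;\le\; \lambda D_{\mathrm{KL}}(P_1 \| Q_1) + (1-\lambda) D_{\mathrm{KL}}(P_2 \| Q_2).
\]
The key is to engineer the mixtures so that one side collapses to a single reference distribution (producing a vanishing KL term), while the other side reproduces $\tilde{\pinew}$.

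For the forward bound, I would take $\lambda = 1 - \alpha$, set $(P_1, P_2) = (\pinew, \piold)$, and force both $Q$'s to equal $\piold$. The $P$-mixture is exactly $\tilde{\pinew}$, while the $Q$-mixture collapses to $\piold$, so joint convexity gives $D_{\mathrm{KL}}(\tilde{\pinew} \| \piold) \le (1-\alpha) D_{\mathrm{KL}}(\pinew \| \piold) + \alpha D_{\mathrm{KL}}(\piold \| \piold)$, and the second term vanishes. For the reverse bound, I would simply swap the roles of $P$ and $Q$: take both $P$'s equal to $\piold$ and set $(Q_1, Q_2) = (\pinew, \piold)$. Now the $P$-mixture stays $\piold$ and the $Q$-mixture becomes $\tilde{\pinew}$, yielding the analogous inequality on $D_{\mathrm{KL}}(\piold \| \tilde{\pinew})$ after the zero KL term drops out.

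The only real subtlety is spotting the right convex-combination assignment --- in particular, using a degenerate mixture on one side (both components equal to $\piold$) so that $\tilde{\pinew}$ is paired against a single reference distribution while the $\alpha$-weighted KL term collapses to zero. Once that is in place, each direction is a one-line application of joint convexity, with no need to unfold the integral definition of KL. Combined with Lemma~\ref{lem:tv}, the upshot is that $\alpha$ simultaneously rescales the TV distance exactly by $1-\alpha$ and upper-bounds both directions of KL by $(1-\alpha)$ times the unsmoothed KL, justifying the interpretation of $\alpha$ as a single knob controlling the radius of a soft trust region around $\piold$.
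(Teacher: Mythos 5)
Your proposal matches the paper's argument exactly: the same invocation of joint convexity of KL with $\lambda = 1-\alpha$, the same assignment $(P_1,P_2)=(\pinew,\piold)$ with both $Q$'s set to $\piold$ so the $\alpha$-weighted term vanishes via $D_{\mathrm{KL}}(\piold\|\piold)=0$, and the same role-swap for the reverse direction. No differences of substance.
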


This establishes that probability smoothing induces an implicit KL penalty, which can be viewed as a soft trust-region with radius of $\alpha$.

\begin{proposition}
[Ratio contraction and non-vanishing slopes]
\label{prop:ratio}
For any action $a$ with $\piold(a\mid s)>0$, and $r(a)$ is the importance sampling ratio for action $a$,
\[
|\tilde r(a)-1|\le(1-\alpha)\,|r(a)-1|,
\qquad
\frac{\partial}{\partial r}\big(\tilde r\,A\big)=(1-\alpha)A.
\]
Thus, \ourmethod{} preserves slope $(1-\alpha)A$ everywhere, avoiding the flat plateaus introduced by clipping outside $[1-\varepsilon,1+\varepsilon]$ (Fig.~\ref{fig:ratio}).
\end{proposition}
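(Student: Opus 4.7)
The plan is to derive both assertions directly from equation~\eqref{eq:smoothed_ratio_full}, which already gives the closed form $\tilde r = (1-\alpha)r + \alpha$. Because the smoothed ratio is affine in $r$, neither claim requires invoking Lemma~\ref{lem:tv} or Corollary~\ref{cor:kl}; the proof reduces to one-line algebra for each part.

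First I would tackle the contraction inequality. Subtracting $1$ from both sides of $\tilde r = (1-\alpha)r + \alpha$ gives $\tilde r - 1 = (1-\alpha)(r-1)$, and taking absolute values yields $|\tilde r - 1| = (1-\alpha)|r-1|$, which is in fact the equality version of the stated bound and holds for all $\alpha \in [0,1]$ with no case analysis on the sign of $r-1$. The condition $\piold(a\mid s) > 0$ is used only implicitly, to ensure that the ratio $r(a)$ in equation~\eqref{eq:ratio} is well-defined in the first place.

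For the slope claim, I would note that the advantage $A$ is computed from rewards under $\piold$ and is therefore independent of $\theta$; so when the surrogate is viewed as a function of $r$, the product $\tilde r\,A = ((1-\alpha)r + \alpha)A$ is affine in $r$ with constant slope $(1-\alpha)A$. Differentiating gives the stated derivative uniformly in $r$, and it is nonzero whenever $A \neq 0$ and $\alpha < 1$, which establishes the non-vanishing property.

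The main ``obstacle'' is really expository rather than mathematical: making clear that $\partial/\partial r$ is taken with $r$ as the free scalar variable of the surrogate (matching the convention used when analysing the clipped objective's slope), and then contrasting the uniform slope $(1-\alpha)A$ with the derivative of $\mathrm{clip}(r,1-\varepsilon,1+\varepsilon)\,A$, which vanishes outside $[1-\varepsilon,1+\varepsilon]$. A brief sentence drawing this comparison would make the reference to Fig.~\ref{fig:ratio} self-contained without adding any further calculation.
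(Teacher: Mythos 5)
Your proof is correct and matches the paper's (implicit) argument: the paper states the proposition without a separate proof, relying on the affine identity $\tilde r = (1-\alpha)r + \alpha$ from Eq.~\eqref{eq:smoothed_ratio_full} exactly as you do. Your observation that $|\tilde r - 1| = (1-\alpha)|r-1|$ holds with equality is a small but accurate sharpening of the stated inequality, and your remark about $A$ being $\theta$-independent (so the surrogate is genuinely affine in $r$) is the right clarification of the convention.
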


This is the key advantage over clipping. With PPO/GRPO clipping, when $r > 1 + \varepsilon$ (for positive advantages) or $r < 1 - \varepsilon$ (for negative advantages), the gradient is exactly zero, the model receives no learning signal for these samples. PSPO instead scales the gradient by $(1 - \alpha)$, which is non-zero everywhere. This means PSPO continues to learn from samples that clipping would ignore, while the contraction still prevents destabilising large updates.

\begin{proposition}[Overconfidence regularisation]
\label{prop:overconfidence}
For any state $s$ and action $a$, the smoothed policy satisfies:
\[
\tilde{\pinew}(a \mid s) \leq max\Big(\pinew(a \mid s), \piold(a \mid s)\Big),
\] with strict inequality whenever $\pinew(a \mid s) \neq \piold(a \mid s)$ and $\pinew(a \mid s) > \piold(a \mid s)$.
\begin{proof}
    From the definition $\tilde{\pinew}(a \mid s) = (1 - \alpha)\pinew(a \mid s) + \alpha\piold(a \mid s)$. When $\pinew(a \mid s) \geq \piold(a \mid s)$, we have $\tilde{\pinew}(a \mid s) < \pinew(a \mid s)$. When $\pinew(a \mid s) < \piold(a \mid s)$, we have $\tilde{\pinew}(a \mid s) < \piold(a \mid s).$ 
\end{proof}
\end{proposition}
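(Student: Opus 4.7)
The plan is to observe that this statement is a direct consequence of the fact that $\tilde{\pinew}(a\mid s)=(1-\alpha)\pinew(a\mid s)+\alpha\piold(a\mid s)$ is a convex combination of $\pinew(a\mid s)$ and $\piold(a\mid s)$ for $\alpha\in[0,1]$; any convex combination of two real numbers lies in their closed convex hull, hence is bounded above by their maximum. So the proof amounts to a short case split.

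First I would fix a state $s$ and action $a$, abbreviate $p=\pinew(a\mid s)$, $q=\piold(a\mid s)$, and $\tilde p=(1-\alpha)p+\alpha q$. Then I would handle the two cases separately. In the case $p\ge q$, substitute the bound $q\le p$ into $\tilde p$ to get $\tilde p\le (1-\alpha)p+\alpha p=p=\max(p,q)$. In the case $p<q$, substitute $p<q$ to get $\tilde p<(1-\alpha)q+\alpha q=q=\max(p,q)$.

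For the strict inequality claim, I would note that when $p>q$ and $\alpha\in(0,1)$, the first case inequality becomes strict because $\alpha q<\alpha p$ whenever $\alpha>0$. (The conditions $p\neq q$ and $p>q$ stated in the proposition are redundant, so I would simply assume $p>q$; I would also flag that the statement implicitly assumes $\alpha\in(0,1)$, since at the endpoints the smoothed probability degenerates to $p$ or $q$ and strictness can fail trivially.)

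There is no genuine obstacle here: the only thing to be careful about is the edge case $\alpha\in\{0,1\}$, where the result still holds with non-strict inequality, and to state precisely where strictness enters. The interpretive content, which I would emphasise briefly after the algebra, is that mass placed on an action by the current policy is pulled down towards the old policy's value, so $\tilde\pinew$ cannot be more confident on any action than the more confident of $\pinew$ and $\piold$, giving the advertised regularisation against overconfidence.
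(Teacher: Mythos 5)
Your proof is correct and takes essentially the same route as the paper's own argument: a two-case split on $\pinew(a\mid s)\ge\piold(a\mid s)$ versus $\pinew(a\mid s)<\piold(a\mid s)$, exploiting that the smoothed probability is a convex combination. Your version is in fact more careful than the paper's, whose case statements assert strict inequalities that fail when $\pinew(a\mid s)=\piold(a\mid s)$ or when $\alpha\in\{0,1\}$; you correctly isolate that strictness requires $\pinew(a\mid s)>\piold(a\mid s)$ and $\alpha\in(0,1)$, and rightly note the redundancy in the proposition's stated conditions.
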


This property prevents the policy from over fitting and becoming overly confident in any single action. When the current policy assigns higher probability to an action than the behaviour policy did, smoothing pulls this probability down. This is particularly relevant for language generation, where multiple tokens may be equally valid continuations.

\begin{proposition}[\ourmethod{} surrogate as a scaled policy gradient with implicit stability]
\label{prop:pgradient}
The per-state \ourmethod{} objective can be written
\begin{equation*}
\begin{split}
\mathcal{J}_{\ourmethod}(\theta)
=\mathbb{E}_{a\sim\piold}\![\tilde r(a)\,A(a)] \\
=(1-\alpha)\,\mathbb{E}_{a\sim\pinew}\![A(a)]
+\alpha\,\mathbb{E}_{a\sim\piold}\![A(a)].
\end{split}
\end{equation*}
using equation~\ref{eq:ratio} and the change of measure formula.
Only the first term depends on $\theta$, and using policy gradient theorem,
\(
\nabla_\theta \mathcal{J}_{\ourmethod{}}
=(1-\alpha)\,\mathbb{E}_{a\sim\pinew}\!\big[\nabla_\theta\log\pi_\theta(a\mid s)\,A(a)\big].
\)
Hence, \ourmethod{} is the \emph{on-policy} gradient scaled by $(1 - \alpha)$, \emph{while} the policy itself is mixed with $\piold$ (Lemma~\ref{lem:tv}), implicitly controlling divergence without an explicit KL term (which we set $\beta{=}0$ in our GRPO runs).
\end{proposition}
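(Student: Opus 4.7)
The plan is to split $\mathcal{J}_{\ourmethod}(\theta)$ into two terms using the explicit form $\tilde r(a)=(1-\alpha)r(a)+\alpha$ from equation~(\ref{eq:smoothed_ratio_full}), apply change of measure on the first term, and then use the log-derivative identity to obtain the gradient.

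First, I would substitute the smoothed ratio directly into the surrogate and split by linearity:
\[
\mathbb{E}_{a\sim\piold}[\tilde r(a)A(a)]
=(1-\alpha)\,\mathbb{E}_{a\sim\piold}[r(a)A(a)]
+\alpha\,\mathbb{E}_{a\sim\piold}[A(a)].
\]
For the first piece, I would invoke the change-of-measure identity: since $r(a)=\pinew(a\mid s)/\piold(a\mid s)$ and $\piold(a\mid s)>0$ on the support of $\pinew(\cdot\mid s)$, we have $\mathbb{E}_{a\sim\piold}[r(a)A(a)]=\sum_a \piold(a\mid s)\frac{\pinew(a\mid s)}{\piold(a\mid s)}A(a)=\mathbb{E}_{a\sim\pinew}[A(a)]$. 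Combining gives the stated two-term decomposition.

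Next, I would take $\nabla_\theta$ of both sides. The second term does not depend on $\theta$ at all (the measure $\piold$ and the sampled advantages $A$ are frozen at rollout time), so it contributes zero. For the first term, I would apply the standard policy gradient / log-derivative trick:
\[
\nabla_\theta\mathbb{E}_{a\sim\pinew}[A(a)]
=\mathbb{E}_{a\sim\pinew}\!\big[\nabla_\theta\log\pinew(a\mid s)\,A(a)\big],
\]
and multiplying by $(1-\alpha)$ yields the claimed gradient expression.

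The only real subtlety, and the one I would flag explicitly, is that $A(a)$ must be treated as $\theta$-independent when applying the log-derivative trick. This is justified in the GRPO setting because the group baseline and rewards are computed from rollouts under $\piold$ and advantages are detached in implementation; otherwise an extra $\nabla_\theta A$ term would appear. A minor point also worth noting is absolute continuity ($\piold \gg \pinew$ on the observed actions) so the ratio $r(a)$ is well-defined; this is automatic for softmax LLM policies with full support. The conclusion about implicit stability then follows by combining this identity with Lemma~\ref{lem:tv}: the gradient is a scaled on-policy gradient, yet the effective policy $\tilde\pinew$ is provably within a $(1-\alpha)$-contracted TV ball of $\piold$, so no explicit KL penalty is needed.
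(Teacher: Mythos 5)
Your proof follows essentially the same route the paper sketches inside the proposition itself: decompose $\tilde r(a)=(1-\alpha)r(a)+\alpha$ by linearity, apply change of measure to convert $\mathbb{E}_{a\sim\piold}[r(a)A(a)]$ into $\mathbb{E}_{a\sim\pinew}[A(a)]$, drop the $\theta$-independent second term, and finish with the log-derivative trick. Your explicit caveats --- that $A(a)$ must be detached (computed from $\piold$ rollouts, not differentiated through) and that absolute continuity is needed for $r(a)$ to be well-defined --- are correct, useful, and are precisely the hidden assumptions the paper relies on without stating when it invokes ``the change of measure formula'' and ``the policy gradient theorem''.
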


This shows that PSPO can be seen as performing standard on-policy gradient descent, but with two modifications: the gradient is scaled by $(1 - \alpha)$, slowing learning, while the policy used to compute advantages is mixed with $\piold$. Together, these effects stabilise learning by anchoring updates to recent behaviour.

Proposition \ref{prop:pgradient} shows that PSPO scales the policy gradient by a factor of $1 - \alpha$. While this may resemble a simple reduction in the learning rate, PSPO is not equivalent to learning-rate scaling alone. The scaling arises from probability mixing with the behaviour policy, which simultaneously contracts the policy distribution toward $\piold{}$. As a result, PSPO adaptively alters the update magnitudes based on the model’s own prior probabilities, rather than applying a uniform step-size reduction. In contrast, clipping also implicitly rescales gradients, but does so discontinuously by truncating updates past a fixed ratio threshold.

Figure~\ref{fig:ratio} illustrates how \ourmethod{} and clipping affect the ratio and the gradients. Clipping creates flat regions where gradients vanish: for $A>0$, the clipped surrogate is constant for $r>1+\varepsilon$; for $A<0$, it is constant for $r<1-\varepsilon$. In contrast, our method smooths the current policy toward the behaviour policy, giving us Eq.~\ref{eq:smoothed_ratio_full}. This smooths ratios toward $1$, creating a soft trust region anchored by $\piold$, while maintaining non-zero gradients everywhere: $\tfrac{\partial}{\partial r}(\tilde r A)=(1-\alpha)A$. Therefore, \ourmethod{} preserves learning signal outside the clip range whilst still controlling updates.

\begin{figure}
    \centering
    \includegraphics[width=0.95\linewidth]{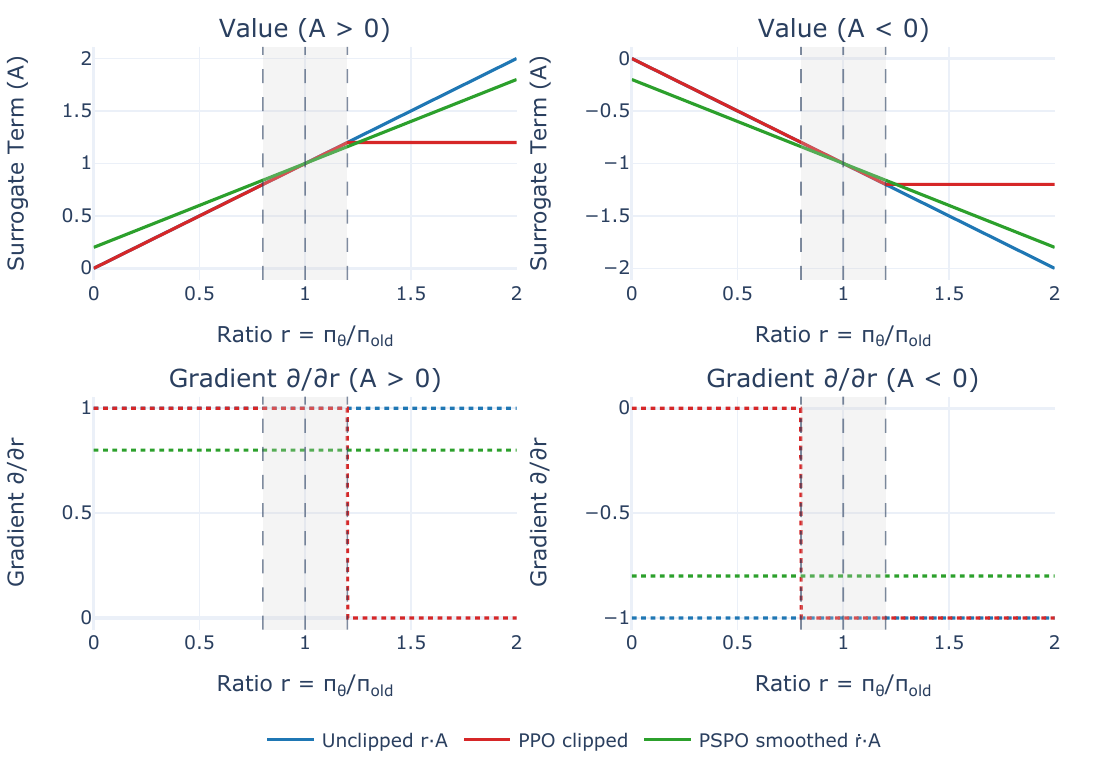}
    \caption{Illustrative plot of ratio $r$ vs. the surrogate term $A$, and the gradients for $A>0$ and $A<0$, with $\varepsilon = 0.2$ and $\alpha = 0.2$. For $A > 0$ the clipped ratio is flat (zero gradient) for $r > 1 + \varepsilon$; for $A < 0$, the clipped ratio is flat when $r < 1 - \varepsilon$. PSPO's slope is $(1-\alpha)A$ everywhere, creating a soft trust region without hard plateaus.}
    \label{fig:ratio}
\end{figure}

\paragraph{Applicability.}
\ourmethod{} is a direct replacement for ratio clipping, requiring only the substitution $\tilde r_t=(1-\alpha)r_t+\alpha$ for $r_t$ in any clipped-ratio objective. This change requires no additional computation or memory beyond evaluating the usual importance ratio.

\paragraph{Selecting $\alpha$.} The smoothing parameter $\alpha \in [0, 1]$ controls the strength of the trust region by determining the fraction of the old policy to mix with the current policy. 
%$\alpha = 0$ is no smoothing, so the full, unmodified, importance sampling ratio will be used. and $\alpha = 1$ is fully smoothed, so the importance sampling ratio will always be 1. 
The larger the value of $\alpha$, the stronger the regularisation is, and the ratio is contracted more toward 1, and the KL upper bounds shrink (Corollary \ref{cor:kl}). A larger $\alpha$ is analogous to a smaller clipping range $\varepsilon$ in traditional GRPO.
In practice, we have found that $\alpha \in [0.1, 0.4]$ works well across our experiments, comparable to typical clipping ranges used in PPO/GRPO. Importantly, $\alpha$ replaces $\varepsilon$ as the trust region hyperparameter: both methods require tuning exactly one scalar controlling the constraint strength, so PSPO introduces no additional hyperparameter burden.

\paragraph{Application to GRPO:} We demonstrate how \ourmethod{} can apply to GRPO, denoted as \ourmethodgr{} which changes \eqref{eq:grpo} to equation ~\ref{eq:gr-pspo},
 \begin{equation}
J^{\text{GR-PSPO}}(\theta) = 
\mathbb{E}_t \Bigg[\frac{1}{G} \sum_{i=1}^{G}
( \tilde{r}_{t,i}(\theta) \hat{A}_{t,i}) - \beta\cdot\mathbb{D}_{KL}[\pinew \mid \pi_{ref}]\Bigg].
\label{eq:gr-pspo}
\end{equation}

\section{Experimental Setup}
% \subsection{Shared Info}
\textbf{Model and Prompt Formatting}
We fine-tune the open source causal LMs Qwen2.5-0.5B and -1.5B~\citep{qwen2.5} and also the Qwen2-Math-1.5B model which has specialised domain knowledge of mathematics~\citep{yang2024qwen2} using their own tokeniser. The primary difference between the Math and the base models is that the Math model is a Qwen2-1.5B model which is then pretrained on a mathematics-specific corpus using supervised fine-tuning.

We use these three models as they are openly available, allow us to test scale across different, smaller sizes and enables us to compare models with and without existing mathematics knowledge. All runs use identical tokenisation and prompt formatting. Each sample is formatted with a \emph{system} instruction followed by the \emph{user} problem text. We use the model’s native chat template via \texttt{tokenizer.apply\_chat\_template(\dots, add\_generation\_prompt{=}True)} to append the assistant header and ensure the model completes in the assistant role, consistent with how the models are trained to be used.

When decoding completions, we set \texttt{max\_completion\_length=128} with a maximum prompt length of 512 tokens, giving 640 tokens overall. Token count in LLM fine-tuning is a big contributing factor on the compute required (approx. 4 times the token count memory required for training), so we chose 128 completion tokens to balance computational constraints whilst allowing room for the response to include reasoning steps. This is also the approximate length of the GSM8K gold answers. We do not enforce any additional stop strings beyond the template defined end of string token.

\textbf{Training}
All methods are trained using the same effective batch size and decoding settings. We train each method across 3 seeds, for the same number of tokens for fair comparison of the different methods; the best-validation checkpoint model is used for evaluations.

For the base models (Qwen2.5-0.5B and -1.5B), which lack specialised knowledge, we trained for 150M tokens. For Qwen2-Math-1.5B, which has existing mathematical knowledge to build on, we train for 75M tokens. Models with existing domain knowledge typically require less training to refine.

\textbf{Methods}
We compare \ourmethodgr{} against several baselines using the HuggingFace Transformer Reinforcement Learning (TRL) implementation of GRPO~\citep{vonwerra2022trl}. \textbf{GRPO baselines:} \grpoclip{} (standard TRL with ratio clipping), \grponoclip{} (on-policy only: iterations=1, ensuring ratio$\equiv$1).
%and GRPO-noclip (clipping disabled, allowing full off-policy IS ratios across multiple iterations). 
\textbf{Alternative trust regions:} We benchmark against the current alternatives to hard clipping; \pppo's SCOPIC objective~\citep{chen_sufficiency_2023} and \TOPR's asymmetric gating~\citep{roux_tapered_2025}, originally designed for PPO and REINFORCE respectively, into the GRPO framework. We also compare against \SAPO~\citep{gao_soft_2025}, which integrates \pppo's smooth sigmoidal scaling with \TOPR's asymmetric advantage weighting, using their TRL implementation backported to the TRL version we use. Table~\ref{tab:methods} summarises these methods and the naming conventions we will use throughout.
% We compare \ourmethod{} and \ourmethodfull{} against several baselines using HuggingFace TRL's GRPO implementation~\citep{trl}. \textbf{GRPO baselines:} GRPO-clip (standard TRL with ratio clipping), GRPO-noclip (on-policy with iterations=1, ensuring ratio≡1), and GRPO-noclip (clipping disabled, multiple iterations). \textbf{Alternative trust regions:} We also implemented \pppo's SCOPIC objective~\citep{\pppo} and \TOPR's asymmetric gating~\citep{\TOPR}—originally designed for PPO and REINFORCE respectively—into the GRPO framework. We also compare against \SAPO \citep{\SAPO}, which integrates \pppo's smooth sigmoidal scaling with \TOPR's asymmetric advantage weighting, using their TRL implementation backported to our version. Table~\ref{tab:methods} summarises the method and also how we refer to it.

\begin{table}[ht]
\centering
\caption{Trust region methods evaluated. All use GRPO's group relative advantages.}
\label{tab:methods}
\small
\begin{tabular}{p{0.2\columnwidth}p{0.35\columnwidth}p{0.3\columnwidth}}
Method & Trust Region Mechanism & Implementation \\
\midrule
\grpoclip & IS ratio clipping & TRL baseline \\
\grponoclip & On-policy (iterations=1 $\rightarrow$ ratio$\equiv$1) & TRL baseline \\
% \grponoclip & Full IS ratio, no clipping & Modified TRL \\
\midrule
\pppo & Sigmoidal soft clipping (SCOPIC) & Adapted to GRPO \\
\TOPR & Asymmetric advantage gating & Adapted to GRPO \\
\SAPO & Asymmetric sigmoidal scaling & Backported from TRL \\
\midrule
\textbf{\ourmethodgr{}} & \textbf{Probability smoothing} & \textbf{Novel (ours)} \\
\bottomrule
\end{tabular}
\end{table}

\textbf{Hyperparameter Tuning}
To tune the hyperparameters, we used a grid-search across learning rate, clipping range, smoothing alpha strength, and tau values. Memory-related and decoding parameters were fixed across all methods. More details on the tuning method and the parameters used for each method can be found in Appendix \ref{app:tuning}.

\textbf{Evaluation}
We evaluate each test set across temperatures $T\in\{0.0,0.2,0.4,0.6,0.8\}$ with top-$p{=}1.0$.
We report the mean zero-shot Top-1 accuracy with 95\% confidence intervals. We use Top-1 at $T=0$ as it reflects single-answer deployment settings and is deterministic. We calculate the accuracy using the true accuracy of a response, i.e. if anywhere in the response the correct answer appears. We also evaluate the reward function style accuracy as well.% We assess response quality using an LLM-as-Judge ~\citep{zheng2023judgingllmasajudgemtbenchchatbot}.

\subsection{Mathematical Reasoning Datasets}

% The initial experiments were trained on GSM8K (standard train/test split)~\citep{cobbe_training_2021} and evaluate in-domain on GSM8K and out-of-distribution on ASDiv~\citep{miao_diverse_2020}, SVAMP~\citep{patel_are_2021}, and MATH~\citep{lightman_lets_2023}. 
The base models were fine-tuned on GSM8K~\citep{cobbe_training_2021} and MATH~\citep{hendrycks_measuring_2021} (using the standard train/test split) and evaluated in-domain on the test sets and out-of-distribution on ASDiv~\citep{miao_diverse_2020} and SVAMP~\citep{patel_are_2021}. 
These benchmarks span basic arithmetic word problems (ASDiv), robustness to linguistic perturbations (SVAMP), and competition-level reasoning (MATH; sampled from MATH~\citep{hendrycks_measuring_2021}). Following Minerva~\citep{lewkowycz_solving_2022} and OpenWebMath~\citep{paster_openwebmath_2024}, we restrict evaluation to problems with numeric final answers to enable automatic verification. For GSM8K training, we split the published training set into 7000 train and 472 validation examples.
 % experiments were then trained on both GSM8K and MATH training datasets, matching common training methods from the literature~\citep{luo_wizardmath_2025}. These were tested in-domain on the GSM8K and MATH test splits, and out-of-distribution on the ASDiv and SVAMP problems.

\subsubsection{System Prompt}
\textbf{System:}
\begin{quote}
\small
You are a careful math solver. Think through the solution and show the steps. Use English only. End the response with the final answer only in the format: '\#\#\#\# $<$final numeric answer only$>$'.
\end{quote}
\textbf{User content:} the raw problem text (with no few-shot exemplars).

We use this prompting to encourage stepwise reasoning and finish with a single, numeric final answer, which can be more easily extracted for calculating the reward.
\subsubsection{Reward Function}
Our rewards follow the commonly used correctness-based setup~\citep{lewkowycz_solving_2022,paster_openwebmath_2024,deepseek-ai_deepseek-r1_2025}: $R{=}1$ for exact numeric correctness within $10^{-6}$ tolerance, plus a $+0.05$ shaping bonus if the output matches the format ``\#\#\#\# $<$number$>$''; values are constrained to $[0,1]$ giving $\{0,0.05,1\}$. We first attempt to extract the number from the requested format; if this is not present, we fall back to the last numeric token in the completion.

% \subsubsection{LLM-as-Judge Evaluation}
% We assess response quality by scoring 4 metrics on a scale of $1 - 5$ using an LLM-as-Judge (validating results by sampling a subset of the responses ourselves). The metrics scored are: constraint adherence (format fidelity, steps present); logical coherence (no contradictions, consistent rationale); mathematical soundness (valid operations/derivations); and clarity (concise, well-structured). Full details on the setup are in App.~\ref{app:llm-as-judge}.

% \subsection{Summarisation Dataset}
% To consider performance across tasks, we also trained models using the XSUM dataset for summarising, with inputs of news articles and one sentence summarisations. 

% \subsubsection{System Prompt}

% \textbf{System:}
% \begin{quote}
% \small
% Summarize the following article in one sentence. Use English only.
% \end{quote}
% \textbf{User content:} the news article to summarise.

% \subsubsection{Reward Function}
% We use the ROUGE scorer to determine rewards for the summaries generated. ....

% \subsubsection{LLM-as-Judge}
\begin{table*}[ht]
\centering
\small
\caption{Comparison of our results across all the model sizes on mathematical reasoning benchmarks. The accuracy presented is the true accuracy (i.e. has the correct answer anywhere in the response, even if it provides an incorrect answer at the end). Accuracy reported as average over 3 seeds, evaluated in greedy setting with T=0.0, and top\_p=1.0.}
\resizebox{\textwidth}{!}{%
\begin{tabular}{lcc|cccc}
\toprule
Model & Size & {Training} & \multicolumn{4}{c}{Accuracy (\%)} \\
& & Data & GSM8K & ASDiv & SVAMP & MATH \\
\midrule
Qwen2.5-0.5B       & 0.5B & N/A & 16.9 & 42.7 & 42.0 & 30.2 \\
Qwen2.5-0.5B+\grpoclip{}          & 0.5B & GSM8K & 54.1 & 77.8 & 66.6 & \textbf{43.0} \\
Qwen2.5-0.5B+\grponoclip{}        & 0.5B & GSM8K & \textbf{56.0} & 77.9 & \textbf{70.0} & 42.6 \\
Qwen2.5-0.5B+\pppo\cite{chen_sufficiency_2022} & 0.5B & GSM8K & 44.5 & 68.5 & 58.3 & 38.9 \\
Qwen2.5-0.5B+\sapo\cite{gao_soft_2025} & 0.5B & GSM8K & 31.8 & 62.0 & 51.8 & 36.2 \\
% Qwen2.5-0.5B+\topr\cite{roux_tapered_2025} & 0.5B & GSM8K &  \\
Qwen2.5-0.5B+\ourmethodgr{}   & 0.5B & GSM8K & 53.0 & \textbf{78.8} & 69.7 & 41.9 \\
\midrule
Qwen2.5-0.5B+\grpoclip{}          & 0.5B & GSM8K+MATH & 49.4 & 76.5 & 62.1 & 34.4\\
Qwen2.5-0.5B+\grponoclip{}        & 0.5B & GSM8K+MATH & \textbf{54.1} & \textbf{76.6} & 56.8 & 36.2\\
Qwen2.5-0.5B+\pppo\cite{chen_sufficiency_2022} & 0.5B & GSM8K+MATH & 50.1 & 75.0 & 62.8 & 36.1 \\
Qwen2.5-0.5B+\sapo\cite{gao_soft_2025} & 0.5B & GSM8K+MATH & 50.3 & \textbf{76.6} & 63.3 & 34.5 \\
Qwen2.5-0.5B+\topr\cite{roux_tapered_2025} & 0.5B & GSM8K+MATH & 2.8 & 3.2 & 7.3 & 8.6 \\
Qwen2.5-0.5B+\ourmethodgr{}   & 0.5B & GSM8K+MATH & 50.9 & 75.9 & \textbf{65.7} & \textbf{37.5} \\
\midrule
Qwen2.5-1.5B       & 1.5B & N/A     & 5.2 & 5.0 & 4.0 & 17.6 \\
% Qwen2.5-1.5B+\grpoclip{}          & 1.5B & GSM8K & 37.8 & 70.9 & 58.4 & 14.9  \\
% Qwen2.5-1.5B+\grponoclip{}        & 1.5B & GSM8K & 57.9 & \textbf{80.4} & \textbf{74.9} & 25.2 \\
% Qwen2.5-1.5B+\ourmethodgr{}   & 1.5B & GSM8K & \textbf{59.4} & 77.7 & 70.3 & \textbf{25.6} \\
% \midrule
Qwen2.5-1.5B+\grpoclip{}          & 1.5B & GSM8K+MATH & 73.7 & 88.8 & 83.3 & 45.9 \\
Qwen2.5-1.5B+\grponoclip{}        & 1.5B & GSM8K+MATH & \textbf{75.4} & 87.9 & \textbf{83.7} & 48.0 \\
Qwen2.5-1.5B+\ourmethodgr{}   & 1.5B & GSM8K+MATH & 73.8 & \textbf{89.4} & 82.7 & \textbf{50.0} \\
\midrule
Qwen2-Math-1.5B       & 1.5B & N/A     & 24.7 & 61.8 & 56.3 & 34.8 \\
% Qwen2-Math-1.5B+\grpoclip{}          & 1.5B & GSM8K+MATH &   \\
Qwen2-Math-1.5B+\grponoclip{}        & 1.5B & GSM8K+MATH & 78.4 & 92.8 & 84.6 & 58.3 \\
Qwen2-Math-1.5B+\grpoclip{}        & 1.5B & GSM8K+MATH & 70.3 & 91.6 & 82.0 & 52.3 \\
Qwen2-Math-1.5B+\pppo{}\cite{chen_sufficiency_2023}   & 1.5B & GSM8K+MATH & 78.8 & 93.0 & 84.7 & 55.8 \\
Qwen2-Math-1.5B+\sapo{}\cite{gao_soft_2025}   & 1.5B & GSM8K+MATH & 79.2 & 93.8 & 85.3 & 59.3 \\
Qwen2-Math-1.5B+\ourmethodgr{}   & 1.5B & GSM8K+MATH & \textbf{79.9} & \textbf{94.0} & \textbf{85.7} & \textbf{59.6} \\
% \midrule
% Qwen2-Math-7B       & 7B & -     & \\
\bottomrule
\end{tabular}
}
\label{tab:ourresults}
\end{table*}

\section{Performance of PSPO compared to the state-of-the-art}
We evaluate \ourmethodgr{} against multiple trust region baselines across mathematical reasoning tasks. Table \ref{tab:ourresults} shows results across all model configurations. Our main finding is that \ourmethodgr{}'s advantages grow with model size and domain specialisation, validating our hypothesis that probability smoothing is most effective when refining models that already have substantial knowledge, when stability is most important.

\subsection{Model Capability and Trust Region Performance}
 \ourmethodgr{}'s advantage scales with prior domain knowledge. On Qwen2.5-0.5B, differences between methods are small. On Qwen2.5-1.5B, \ourmethodgr{} achieves the strongest MATH performance (+4.1pp over \grpoclip{}). On Qwen2-Math-1.5B, a model with existing mathematical domain knowledge, \ourmethodgr{} outperforms on all four benchmarks.

This scaling is reflected in GSM8K performance, where GR-PSPO transitions from $\text{-}3.2$\% relative to the best per dataset (0.5B) to $\text{-}1.6$\% (1.5B) to +0.7\% (Math-1.5B).

\paragraph{Math-Specialised Model}
On Qwen2-Math-1.5B, \ourmethodgr{} achieves the strongest performance across all benchmarks. At $T=0$, \ourmethodgr{} attains 79.9\% on GSM8K (versus 78.4\% for GRPO-noclip), 94.0\% on ASDiv, 85.7\% on SVAMP, and 59.6\% on MATH.

The consistency across tasks—from arithmetic word problems (ASDiv, SVAMP) to competition-level reasoning (MATH)—suggests that probability smoothing generalises across difficulty levels when refining models with a base competence.

\subsection{Base Models}
On base Qwen2.5 models without prior mathematical training, \ourmethodgr{}'s advantages are most pronounced on MATH, the most challenging benchmark.
For Qwen2.5-0.5B trained on GSM8K+MATH, GR-PSPO achieves the highest MATH accuracy (37.5\% vs. 36.2\% for GRPO-noclip) but trails on GSM8K (50.9\% vs.\ 54.1\%). For Qwen2.5-1.5B, GR-PSPO achieves 50.0\% on MATH versus 45.9\% for GRPO-clip (+4.1pp), with comparable GSM8K performance.

When fine-tuning a base model on a new task, there is less existing knowledge to preserve, and aggressive updates may accelerate learning. When refining a model that already has domain expertise, stability becomes critical. GR-PSPO's behaviour-anchored smoothing provides this property, explaining its consistent gains on Qwen2-Math.

\subsection{Comparison to alternative methods.} 
GR-PSPO generally outperforms P3O and SAPO on both GSM8K and MATH across model sizes (Table 2). On Math-1.5B, GR-PSPO achieves 59.6\% on MATH versus 55.8\% for P3O (+3.8pp). %TOPR results were omitted due to [training instability / incomplete runs / etc.].
We also adapted TOPR~\citep{roux_tapered_2025} to the GRPO framework, but the model seems to collapse during training, leading to below base model performance. TOPR was originally designed for REINFORCE, so the model may of collapsed due to the group-relative advantage formulation in GRPO.

\subsection{Comparison to published results.} 
Our results outperform prior work at similar scales (Table \ref{tab:comparison}). \citet{luo_wizardmath_2025} report 50.1\% GSM8K and 21.2\% MATH on WizardMath-Large (0.7B); our 0.5B GR-PSPO achieves 50.9\% and 37.5\% respectively. Our Qwen2-Math-1.5B results (79.9\% GSM8K, 59.6\% MATH) substantially exceed WizardMath-GPT2-XL (58.9\%, 25.4\%) at comparable scale. 
More notably, our Qwen2-Math-1.5B results outperform the Llama 3 8B model from \citet{roux_tapered_2025}, achieving +4.5pp on GSM8K and +36.9pp on MATH despite being substantially smaller.
Direct comparison is complicated by differences in evaluation methodology; notably, our context window is much shorter (640 tokens with 128 completion tokens versus 2000+ tokens in prior work, or 128 completion tokens vs. the 512 completion tokens). We evaluate on the harshest settings, and still have competitive performance.

\begin{table*}[ht]
\centering
\small
\caption{Comparison on mathematical reasoning benchmarks. \textbf{Reward Source}: Rule = rule-based correctness function, Model = trained reward model. \textbf{Reward Type}: O = outcome (final answer), P = process (intermediate reasoning steps). The accuracy presented is the true accuracy (i.e. has the correct answer anywhere in the response, even if it provides an incorrect answer at the end). Note: ``—" indicates data could not be found; ``N/A" indicates not applicable.}
\resizebox{\textwidth}{!}{%
\begin{tabular}{lc|cc|cc|ccc|cc}
\toprule
\multirow{2}{*}{Model} & \multirow{2}{*}{Size} & \multicolumn{2}{c|}{Training} & \multicolumn{2}{c|}{Reward} & \multicolumn{3}{c|}{Evaluation} & \multicolumn{2}{c}{Accuracy (\%)} \\
\cmidrule(lr){3-4} \cmidrule(lr){5-6} \cmidrule(lr){7-9} \cmidrule(lr){10-11}
& & Data & Method & Source & Type & Prompting & Decoding & Tokens & GSM8K & MATH \\
\midrule
\multicolumn{11}{l}{\textbf{Smaller Language Models $<$ 1B parameters.}} \\
\midrule
WizardMath-Small \citep{luo_wizardmath_2025} & 0.1B & GSM8K+MATH & PPO & Model & P+O & - & T=0 & 2048 & 26.4 & 12.3 \\
WizardMath-Medium \citep{luo_wizardmath_2025} & 0.3B & GSM8K+MATH & PPO & Model & P+O & - & T=0 & 2048 & 38.7 & 15.6 \\
WizardMath-Large \citep{luo_wizardmath_2025} & 0.7B & GSM8K+MATH & PPO & Model & P+O & - & T=0 & 2048 & 50.1 & 21.2 \\
Qwen2.5-0.5B+RL \citep{zhuang_technical_2025} & 0.5B & GSM8K & GRPO & Rule & O & - & T=0.6, top\_p=0.95 & 32768 & 34.1 & 23.6 \\ %MATH
FlanT5-L Specialized \citep{fu_specializing_2023} & 0.76B & GSM8K & Distill & N/A & N/A & Few-shot & - & 8192 & 20.2 & - \\
GPT-2 (Socratic) \citep{shridhar_distilling_2023} & 0.77B & GSM8K+SCoT & Distill & N/A & N/A & Socratic CoT & - & - & 21.1 & - \\
\midrule
% Qwen2.5-0.5B       & 0.5B & -     & - & - & - & Zero-shot & T=0, top\_p=1.0 & 640 & 16.9 & 42.7 & 42.0 & 30.2 \\
% SFT                & 0.5B & GSM8K & SFT & - & - & Zero-shot & T=0, top\_p=1.0 & 640 & 28.6 & 57.6 & 38.1 & 14.2 \\
% Qwen2.5-0.5B+\grpoclip{}          & 0.5B & GSM8K & GRPO & Rule & O & Zero-shot & T=0, top\_p=1.0 & 640 & 54.1 & 77.8 & 66.6 & 43.0 \\
% Qwen2.5-0.5B+\grponoclip{}        & 0.5B & GSM8K & GRPO (iterations=1) & Rule & O &  Zero-shot & T=0, top\_p=1.0 & 640 & 56.0 & 42.6 \\
% Qwen2.5-0.5B+\pppo\cite{chen_sufficiency_2022} & 0.5B & GSM8K & P3O & Rule & O & Zero-shot & T=0, top\_p=1.0 & 640 & 44.5 & 68.5 & 58.3 & 38.9 \\
% Qwen2.5-0.5B+\sapo\cite{gao_soft_2025} & 0.5B & GSM8K & SAPO & Rule & O & Zero-shot & T=0, top\_p=1.0 & 640 & 31.8 & 62.0 & 51.8 & 36.2 \\
% Qwen2.5-0.5B+\topr\cite{roux_tapered_2025} & 0.5B & GSM8K & TOPR & Rule & O & Zero-shot & T=0, top\_p=1.0 & 640 &  \\
% Qwen2.5-0.5B+\ourmethodgr{}   & 0.5B & GSM8K & GR-PSPO & Rule & O & Zero-shot & T=0, top\_p=1.0 & 640 & 53.0 & 41.9 \\
% Qwen2.5-0.5B+\grpoclip{}          & 0.5B & GSM8K+MATH & GRPO & Rule & O & Zero-shot & T=0, top\_p=1.0 & 640 & 49.4 & 76.5 & 62.1 & 34.4\\
Qwen2.5-0.5B+\grponoclip{}        & 0.5B & GSM8K+MATH & GRPO (iterations=1) & Rule & O & Zero-shot & T=0, top\_p=1.0 & 640 & \textbf{54.1} & 36.2\\
% Qwen2.5-0.5B+\pppo\cite{chen_sufficiency_2022} & 0.5B & GSM8K+MATH & P3O & Rule & O & Zero-shot & T=0, top\_p=1.0 & 640 & 50.1 & 75.0 & 62.8 & 36.1 \\
% Qwen2.5-0.5B+\sapo\cite{gao_soft_2025} & 0.5B & GSM8K+MATH & SAPO & Rule & O & Zero-shot & T=0, top\_p=1.0 & 640 & 50.3 & 76.6 & 63.3 & 34.5 \\
% Qwen2.5-0.5B+\topr\cite{roux_tapered_2025} & 0.5B & GSM8K+MATH & TOPR & Rule & O & Zero-shot & T=0, top\_p=1.0 & 640 &  \\
Qwen2.5-0.5B+\ourmethodgr{}   & 0.5B & GSM8K+MATH & GR-PSPO & Rule & O & Zero-shot & T=0, top\_p=1.0 & 640 & 50.9 & \textbf{37.5} \\
\midrule
% Qwen2.5-1.5B       & 1.5B & -     & - & - & - & Zero-shot & T=0, top\_p=1.0 & 640 & 5.2 & 5.0 & 4.0 & 17.6 \\
% Qwen2.5-1.5B+\grpoclip{}          & 1.5B & GSM8K & GRPO & Rule & O & Zero-shot & T=0, top\_p=1.0 & 128 & 37.8 & 70.9 & 58.4 & 14.9  \\
% Qwen2.5-1.5B+\grponoclip{}        & 1.5B & GSM8K & GRPO (iterations=1) & Rule & O & Zero-shot & T=0, top\_p=1.0 & 128 & 57.9 & \textbf{80.4} & \textbf{74.9} & 25.2 \\
% Qwen2.5-1.5B+\ourmethodgr{}   & 1.5B & GSM8K & GR-PSPO & Rule & O & Zero-shot & T=0, top\_p=1.0 & 128 & \textbf{59.4} & 77.7 & 70.3 & \textbf{25.6} \\
% \midrule
% Qwen2.5-1.5B+\grpoclip{}          & 1.5B & GSM8K+MATH & GRPO & Rule & O & Zero-shot & T=0, top\_p=1.0 & 640 & 73.7 & 88.8 & 83.3 & 45.9 \\
\midrule
\multicolumn{11}{l}{\textbf{Mid-to-Large Language Models $>$ 1B parameters.}}\\
\midrule
GPT-2-XL~\citep{brown_language_2020} & 1.5B  & GSM8K+MATH  & N/A & N/A & N/A & Few-Shot & - & 2048 & 15.4 & 6.9 \\
WizardMath-GPT2-XL~\citep{luo_wizardmath_2025} & 1.5B & GSM8K+MATH & PPO & Model & P+O & - & T=0 & 2048 & 58.9 & 25.4 \\
Llama 3 8B+\topr~\cite{roux_tapered_2025} & 8B & GSM8K+MATH & TOPR & Rule & O & Few-Shot & - & 512 (completion tokens) & 75.4 & 22.7 \\
\midrule
Qwen2.5-1.5B+\grponoclip{}        & 1.5B & GSM8K+MATH & GRPO (iterations=1) & Rule & O & Zero-shot & T=0, top\_p=1.0 & 640 & 75.4 & 48.0 \\
Qwen2.5-1.5B+\ourmethodgr{}   & 1.5B & GSM8K+MATH & GR-PSPO & Rule & O & Zero-shot & T=0, top\_p=1.0 & 640 & 73.8 & 50.0 \\
% Qwen2-Math-1.5B       & 1.5B & -     & - & - & - & Zero-shot & T=0, top\_p=1.0 & 640 & 24.7 & 61.8 & 56.3 & 34.8 \\
% Qwen2-Math-1.5B+\grpoclip{}          & 1.5B & GSM8K+MATH & GRPO & Rule & O & Zero-shot & T=0, top\_p=1.0 & 128 &   \\
% Qwen2-Math-1.5B+\grponoclip{}        & 1.5B & GSM8K+MATH & GRPO (iterations=1) & Rule & O & Zero-shot & T=0, top\_p=1.0 & 640 & 78.4 & 92.7 & 84.1 & 58.3 \\
% Qwen2-Math-1.5B+\grpoclip{}        & 1.5B & GSM8K+MATH & GRPO & Rule & O & Zero-shot & T=0, top\_p=1.0 & 640 &  \\
% Qwen2-Math-1.5B+\pppo{}   & 1.5B & GSM8K+MATH & \pppo{} & Rule & O & Zero-shot & T=0, top\_p=1.0 & 640 & 78.8 & 93.0 & 84.7 & 55.8 \\
% Qwen2-Math-1.5B+\sapo{}   & 1.5B & GSM8K+MATH & \sapo{} & Rule & O & Zero-shot & T=0, top\_p=1.0 & 640 & 79.2 & 93.8 &  &  \\
Qwen2-Math-1.5B+\ourmethodgr{}   & 1.5B & GSM8K+MATH & GR-PSPO & Rule & O & Zero-shot & T=0, top\_p=1.0 & 640 & \textbf{79.9 }& \textbf{59.6} \\
% Qwen2-Math-7B       & 7B & -     & - & - & - & Zero-shot & T=0, top\_p=1.0 & 640 & \\
\bottomrule
\end{tabular}
}
\label{tab:comparison}
\end{table*}

\paragraph{Smoothing Alpha Sensitivity}
We ablate $\alpha$ on Qwen2-Math-1.5B (Appendix \ref{app:alpha}). Performance peaks around $\alpha$ = 0.4.

\paragraph{Preliminary larger model results (7B).} 
To further consider scaling behaviour we ran one exploratory seed for a reduced number of tokens on 7B models. The Qwen2.5-7B run shows a similar trend as our smaller-scale experiments, with: \ourmethodgr{} outperforming on MATH. For example, PSPO’s top-1 (true) accuracy is 55.8\% on MATH compared with the base model's 33.9\%, and \grpoclip{}'s 53.6\%. \grponoclip{} has an accuracy of 0\% indicating complete collapse.
These 7B measurements are single-seed and exploratory, so we do not use them to change conclusions drawn from the 3-seed experiments.

\section{Discussion and Limitations}
% Compared with ratio clipping and explicit KL regularisation, \ourmethod{} provides stability without needing to truncate the surrogate objective, and it does so without adding compute or an extra optimisation objective.

Unlike clipping which truncates the surrogate objective and discards gradient information outside the allowed range, \ourmethod{} contracts ratios toward 1 through probability smoothing while preserving non-zero gradients everywhere. This is achieved without additional compute, memory overhead, or extra optimisation objectives, the only change is a linear transformation of the importance ratio.

Our results across three model configurations validate that probability smoothing is most effective when stability is required, particularly when refining a model with existing domain knowledge.

This is shown by performance improvements scaling with domain specialisation. On general base models (Qwen2.5-0.5B/1.5B), GR-PSPO shows advantages primarily on the hardest benchmark (MATH). On the specialised model (Qwen2-Math-1.5B), GR-PSPO outperforms consistently across all tasks, suggesting that PSPO's benefits are clearest when there is existing knowledge to preserve.

Additionally, performance improves with scale. The progression from 0.5B (mixed results) to 1.5B (strong MATH performance) to Math-1.5B (consistent gains) demonstrates that PSPO's advantages compound as model size grows. On MATH, GR-PSPO transitions from -1.1\% compared to clipping (0.5B) to +4.1pp absolute advantage (1.5B).

When fine-tuning very small base models (0.5B), the cost of discarding gradient information may be less impactful. But as models grow and accumulate knowledge, each sample becomes more costly to generate, and preserving learning signal everywhere becomes critical. PSPO's non-zero gradients (Proposition~\ref{prop:pgradient}) provide exactly this property. Unlike clipping's hard boundary, PSPO's behaviour-anchored smoothing provides a soft constraint that scales naturally with how far the policy deviates. Sigmoid-based alternatives such as P3O and SAPO are more vulnerable to saturation at extreme ratios and asymmetric boundaries require additional hyperparameters (SAPO and TOPR each require at least two $\tau$ values; TOPR can require up to four).

Compared to \grponoclip{}, \ourmethodgr{} achieves similar or better performance while enabling multi-epoch training. \grponoclip{} avoids the need for trust regions by using only a single iteration over each batch, ensuring the importance ratio is always 1. This can work well when fresh on-policy samples are cheap, but generating rollouts from LLMs is computationally expensive. When reusing data, the policy diverges from the behaviour policy, making trust region mechanisms essential for stability. Additionally, \citet{zheng_group_2025} and \citet{normaluhr2025grpo} note that larger models often require processing rollout batches in mini-batches to fit in memory, making single-iteration training impractical. PSPO enables stable multi-epoch training without sacrificing gradient information.

Refining a base model with domain knowledge is one of the most relevant for production LLM deployment. In practice, organisations fine-tune capable foundation models that already possess substantial knowledge and capabilities. The goal is improving performance on specific tasks while preserving the model's general competence, which is exactly where our results show PSPO excels. 

%\section{Limitations}
% While our results demonstrate the effectiveness of \ourmethodgr{} on mathematical reasoning tasks, there are limitations in our experimental approach. 

% Our experimental evaluation is only on mathematical reasoning, where there are binary, objective reward signals. The effectiveness of probability smoothing in domains with more subjective or continuous rewards remains unexplored, and should be considered in future work. This will also provide more insight into the sensitivity of $\alpha$ across domains.

% The scale of our experiments is limited to models under 2B parameters. In practice, larger models are normally deployed. Although we demonstrate \ourmethodgr{} performs across two model sizes, future work should consider larger model sizes, as well as different architectures and tokeniser uses.

% Finally, \ourmethodgr{} achieves similar quantitative performance to \grponoclip{}, and we have noted that literature\citep{zheng_group_2025, normaluhr2025grpo} both have claimed GRPO struggles with larger models and more complex architectures. Empirically comparing \ourmethodgr{} against GRPO in these settings will allow a fuller characterisation of where our method provides practical advantages.

\section{Conclusions}
We introduced Probability Smoothing Policy Optimisation (\ourmethod{}) as a gradient-preserving alternative to ratio clipping in reinforcement learning for large language models, which mixes the current policy with the behaviour policy before forming the importance ratio. This blend results in a behaviour-anchored \emph{soft trust region} that linearly contracts ratios around $r{=}1$, shrinks TV/KL divergence bounds (Lemma \ref{lem:tv}, Corollary \ref{cor:kl}), and preserves non-zero gradients everywhere (Proposition \ref{prop:ratio}). 

We empirically evaluate our method by implementing \ourmethod{} within GRPO as \ourmethodgr{}, our method demonstrates clear advantages in settings where stability matters most. On Qwen2-Math-1.5B, a model with prior supervised mathematical training, \ourmethodgr{} outperforms baselines across all four benchmarks; giving results similar to Llama 3 8B+TOPR~\cite{roux_tapered_2025}. On base Qwen2.5-1.5B, \ourmethodgr{} achieves the strongest MATH performance, exceeding \grpoclip{} by 4.1pp. These results support our theoretical motivation that when there is existing capability to preserve, \ourmethod{}'s behaviour-anchored smoothing prevents the destabilising updates more effectively than clipping.

% Beyond accuracy, GR-PSPO consistently produces higher-quality responses as measured by our LLM-as-judge evaluation, with improvements in constraint adherence, logical coherence, and clarity. This suggests that the overconfidence regularisation induced by probability smoothing helps maintain structured output patterns.

Additionally, \ourmethod{} is a drop-in replacement for clipping. \ourmethod{} requires tuning of only one hyperparameter ($\alpha$, replacing $\varepsilon$), and adds no computational or memory overhead. These properties make it particularly suited to settings where data reuse is necessary and preserving existing model capabilities is critical, the common case in post-SFT reinforcement learning pipelines.

% Acknowledgements should only appear in the accepted version.
% \section*{Acknowledgements}
% We thank Stephen Pasteris for helpful discussions and feedback on the mathematical aspects of this work. % As he briefly checked over the proofs for me and gave me some advice on notation consistency
% This work was supported by UK Research and Innovation [EP/S024298/1]. % the cdt required acknowledgement
% The authors acknowledge the use of the IRIDIS High Performance Computing Facility, and associated support services at the University of Southampton, in the completion of this work.' % the iridis acknowledgement

\section*{Impact Statement}
% This work presents an alternative method to provide stability to training for fine-tuning large language models. By improving stability during post-training, PSPO may reduce wasted computation from failed training runs,. The method itself is agnostic to the reward signal used, meaning its societal impact depends on downstream applications. We do not foresee unique risks beyond those inherent to LLM fine-tuning more broadly.

This paper presents work whose goal is to advance the field of Machine Learning. There are many potential societal consequences of our work, none which we feel must be specifically highlighted here.

% In the unusual situation where you want a paper to appear in the
% references without citing it in the main text, use \nocite

\bibliography{main}

@misc{chen_sufficiency_2022,
	title = {The {Sufficiency} of {Off}-{Policyness} and {Soft} {Clipping}: {PPO} is still {Insufficient} according to an {Off}-{Policy} {Measure}},
	shorttitle = {The {Sufficiency} of {Off}-{Policyness} and {Soft} {Clipping}},
	url = {http://arxiv.org/abs/2205.10047},
	doi = {10.48550/arXiv.2205.10047},

	language = {en},
	urldate = {2025-08-29},
	publisher = {arXiv},
	author = {Chen, Xing and Diao, Dongcui and Chen, Hechang and Yao, Hengshuai and Piao, Haiyin and Sun, Zhixiao and Yang, Zhiwei and Goebel, Randy and Jiang, Bei and Chang, Yi},
	month = dec,
	year = {2022},
	note = {arXiv:2205.10047 [cs]},
	keywords = {Computer Science - Machine Learning},

}

@inproceedings{muller_when_nodate,
	title = {When does label smoothing help?},
    year={2019},
    booktitle = {Advances in Neural Information Processing Systems 33 (NeurIPS 2019)},
	language = {en},
	author = {Müller, Rafael and Kornblith, Simon and Hinton, Geoffrey E},
}

@misc{luo_wizardmath_2025,
	title = {{WizardMath}: {Empowering} {Mathematical} {Reasoning} for {Large} {Language} {Models} via {Reinforced} {Evol}-{Instruct}},
	shorttitle = {{WizardMath}},
	url = {http://arxiv.org/abs/2308.09583},
	doi = {10.48550/arXiv.2308.09583},

	language = {en},
	urldate = {2025-09-23},
	publisher = {arXiv},
	author = {Luo, Haipeng and Sun, Qingfeng and Xu, Can and Zhao, Pu and Lou, Jianguang and Tao, Chongyang and Geng, Xiubo and Lin, Qingwei and Chen, Shifeng and Tang, Yansong and Zhang, Dongmei},
	month = jun,
	year = {2025},
	note = {arXiv:2308.09583 [cs]},
	keywords = {Computer Science - Artificial Intelligence, Computer Science - Computation and Language, Computer Science - Machine Learning},
	annote = {Comment: This paper has been accepted to ICLR 2025 as an Oral presentation},

}

@misc{lewkowycz_solving_2022,
	title = {Solving {Quantitative} {Reasoning} {Problems} with {Language} {Models}},
	url = {http://arxiv.org/abs/2206.14858},
	doi = {10.48550/arXiv.2206.14858},

	language = {en},
	urldate = {2025-09-23},
	publisher = {arXiv},
	author = {Lewkowycz, Aitor and Andreassen, Anders and Dohan, David and Dyer, Ethan and Michalewski, Henryk and Ramasesh, Vinay and Slone, Ambrose and Anil, Cem and Schlag, Imanol and Gutman-Solo, Theo and Wu, Yuhuai and Neyshabur, Behnam and Gur-Ari, Guy and Misra, Vedant},
	month = jul,
	year = {2022},
	note = {arXiv:2206.14858 [cs]},
	keywords = {Computer Science - Artificial Intelligence, Computer Science - Computation and Language, Computer Science - Machine Learning},
	annote = {Comment: 12 pages, 5 figures + references and appendices},

}

@inproceedings{shridhar_distilling_2023,
	address = {Toronto, Canada},
	title = {Distilling {Reasoning} {Capabilities} into {Smaller} {Language} {Models}},
	url = {https://aclanthology.org/2023.findings-acl.441},
	doi = {10.18653/v1/2023.findings-acl.441},

	language = {en},
	urldate = {2025-09-24},
	booktitle = {Findings of the {Association} for {Computational} {Linguistics}: {ACL} 2023},
	publisher = {Association for Computational Linguistics},
	author = {Shridhar, Kumar and Stolfo, Alessandro and Sachan, Mrinmaya},
	year = {2023},
	pages = {7059--7073},

}

@misc{vonwerra2022trl,
	title        = {{TRL: Transformer Reinforcement Learning}},
	author       = {Leandro von Werra and Younes Belkada and Lewis Tunstall and Edward Beeching and Tristan Thrush and Nathan Lambert and Shengyi Huang and Kashif Rasul and Quentin Gallou{\'e}dec},
	year         = 2020,
	journal      = {GitHub repository},
	publisher    = {GitHub},
	howpublished = {\url{https://github.com/huggingface/trl}}
}

@misc{shao_deepseekmath_2024,
  title = {{DeepSeekMath}: {Pushing} the {Limits} of {Mathematical} {Reasoning} in {Open} {Language} {Models}},
  shorttitle = {{DeepSeekMath}},
  url = {http://arxiv.org/abs/2402.03300},
  doi = {10.48550/arXiv.2402.03300},

  language = {en},
  urldate = {2025-08-29},
  publisher = {arXiv},
  author = {Shao, Zhihong and Wang, Peiyi and Zhu, Qihao and Xu, Runxin and Song, Junxiao and Bi, Xiao and Zhang, Haowei and Zhang, Mingchuan and Li, Y. K. and Wu, Y. and Guo, Daya},
  month = apr,
  year = {2024},
  note = {arXiv:2402.03300 [cs]},
  keywords = {Computer Science - Artificial Intelligence, Computer Science - Machine Learning, Computer Science - Computation and Language},
}

@misc{sun_you_2022,
  title = {You {May} {Not} {Need} {Ratio} {Clipping} in {PPO}},
  url = {http://arxiv.org/abs/2202.00079},
  doi = {10.48550/arXiv.2202.00079},

  language = {en},
  urldate = {2025-08-29},
  publisher = {arXiv},
  author = {Sun, Mingfei and Kurin, Vitaly and Liu, Guoqing and Devlin, Sam and Qin, Tao and Hofmann, Katja and Whiteson, Shimon},
  month = jan,
  year = {2022},
  note = {arXiv:2202.00079 [cs]},
  keywords = {Computer Science - Artificial Intelligence, Computer Science - Machine Learning},
}

@inproceedings{szegedy_rethinking_2016,
  address = {Las Vegas, NV, USA},
  title = {Rethinking the {Inception} {Architecture} for {Computer} {Vision}},
  isbn = {978-1-4673-8851-1},
  url = {http://ieeexplore.ieee.org/document/7780677/},
  doi = {10.1109/CVPR.2016.308},

  language = {en},
  urldate = {2025-09-19},
  booktitle = {2016 {IEEE} {Conference} on {Computer} {Vision} and {Pattern} {Recognition} ({CVPR})},
  publisher = {IEEE},
  author = {Szegedy, Christian and Vanhoucke, Vincent and Ioffe, Sergey and Shlens, Jon and Wojna, Zbigniew},
  month = jun,
  year = {2016},
  pages = {2818--2826},
}

@article{ouyang_training_2022,
  title = {Training language models to follow instructions with human feedback},

  language = {en},
  author = {Ouyang, Long and Wu, Jeff and Jiang, Xu and Almeida, Diogo and Wainwright, Carroll L. and Mishkin, Pamela and Zhang, Chong and Agarwal, Sandhini and Slama, Katarina and Ray, Alex and Schulman, John and Hilton, Jacob and Kelton, Fraser and Miller, Luke and Simens, Maddie and Askell, Amanda and Welinder, Peter and Christiano, Paul and Leike, Jan and Lowe, Ryan},
  year = {2022},
  booktitle = {Advances in Neural Information Processing Systems 35 (NeurIPS 2022)},
  doi = {10.48550/arXiv.2203.02155},
}

@article{nakano_webgpt_2021,
  title = {{WebGPT}: {Browser}-assisted question-answering with human feedback},

  language = {en},
  author = {Nakano, Reiichiro and Hilton, Jacob and Balaji, Suchir and Wu, Jeff and Ouyang, Long and Kim, Christina and Hesse, Christopher and Jain, Shantanu and Kosaraju, Vineet and Saunders, William and Jiang, Xu and Cobbe, Karl and Eloundou, Tyna and Krueger, Gretchen and Button, Kevin and Knight, Matthew and Chess, Benjamin and Schulman, John},
  year = {2021},
  url = {https://arxiv.org/abs/2112.09332},
  doi = {10.48550/arXiv.2112.09332},
}

@misc{touvron_llama_2023,
  title = {Llama 2: {Open} {Foundation} and {Fine}-{Tuned} {Chat} {Models}},
  shorttitle = {Llama 2},
  url = {http://arxiv.org/abs/2307.09288},
  doi = {10.48550/arXiv.2307.09288},

  language = {en},
  urldate = {2025-09-19},
  publisher = {arXiv},
  author = {Touvron, Hugo and Martin, Louis and Stone, Kevin and Albert, Peter and Almahairi, Amjad and Babaei, Yasmine and Bashlykov, Nikolay and Batra, Soumya and Bhargava, Prajjwal and Bhosale, Shruti and Bikel, Dan and Blecher, Lukas and Ferrer, Cristian Canton and Chen, Moya and Cucurull, Guillem and Esiobu, David and Fernandes, Jude and Fu, Jeremy and Fu, Wenyin and Fuller, Brian and Gao, Cynthia and Goswami, Vedanuj and Goyal, Naman and Hartshorn, Anthony and Hosseini, Saghar and Hou, Rui and Inan, Hakan and Kardas, Marcin and Kerkez, Viktor and Khabsa, Madian and Kloumann, Isabel and Korenev, Artem and Koura, Punit Singh and Lachaux, Marie-Anne and Lavril, Thibaut and Lee, Jenya and Liskovich, Diana and Lu, Yinghai and Mao, Yuning and Martinet, Xavier and Mihaylov, Todor and Mishra, Pushkar and Molybog, Igor and Nie, Yixin and Poulton, Andrew and Reizenstein, Jeremy and Rungta, Rashi and Saladi, Kalyan and Schelten, Alan and Silva, Ruan and Smith, Eric Michael and Subramanian, Ranjan and Tan, Xiaoqing Ellen and Tang, Binh and Taylor, Ross and Williams, Adina and Kuan, Jian Xiang and Xu, Puxin and Yan, Zheng and Zarov, Iliyan and Zhang, Yuchen and Fan, Angela and Kambadur, Melanie and Narang, Sharan and Rodriguez, Aurelien and Stojnic, Robert and Edunov, Sergey and Scialom, Thomas},
  month = jul,
  year = {2023},
  note = {arXiv:2307.09288 [cs]},
  keywords = {Computer Science - Artificial Intelligence, Computer Science - Computation and Language},
}

@misc{goibert_adversarial_2019,
  title = {Adversarial {Robustness} via {Label}-{Smoothing}},
  url = {http://arxiv.org/abs/1906.11567},
  doi = {10.48550/arXiv.1906.11567},

  language = {en},
  urldate = {2025-09-19},
  publisher = {arXiv},
  author = {Goibert, Morgane and Dohmatob, Elvis},
  month = oct,
  year = {2019},
  note = {arXiv:1906.11567 [cs]},
  keywords = {Computer Science - Artificial Intelligence, Computer Science - Machine Learning, Statistics - Machine Learning},
}

@misc{schulman_trust_2017,
  title = {Trust {Region} {Policy} {Optimization}},
  url = {http://arxiv.org/abs/1502.05477},
  doi = {10.48550/arXiv.1502.05477},

  language = {en},
  urldate = {2025-09-20},
  publisher = {arXiv},
  author = {Schulman, John and Levine, Sergey and Moritz, Philipp and Jordan, Michael I. and Abbeel, Pieter},
  month = apr,
  year = {2015},
  note = {arXiv:1502.05477 [cs]; originally presented ICML 2015 as TRPO variants},
  keywords = {Computer Science - Machine Learning},
}

@misc{schulman_proximal_2017,
  title = {Proximal {Policy} {Optimization} {Algorithms}},
  url = {http://arxiv.org/abs/1707.06347},
  doi = {10.48550/arXiv.1707.06347},

  language = {en},
  urldate = {2025-09-20},
  publisher = {arXiv},
  author = {Schulman, John and Wolski, Filip and Dhariwal, Prafulla and Radford, Alec and Klimov, Oleg},
  month = aug,
  year = {2017},
  note = {arXiv:1707.06347 [cs]},
  keywords = {Computer Science - Machine Learning},
}

@misc{cobbe_training_2021,
  title = {Training {Verifiers} to {Solve} {Math} {Word} {Problems}},
  url = {http://arxiv.org/abs/2110.14168},
  doi = {10.48550/arXiv.2110.14168},

  language = {en},
  urldate = {2025-09-20},
  publisher = {arXiv},
  author = {Cobbe, Karl and Kosaraju, Vineet and Bavarian, Mohammad and Chen, Mark and Jun, Heewoo and Kaiser, Lukasz and Plappert, Matthias and Tworek, Jerry and Hilton, Jacob and Nakano, Reiichiro and Hesse, Christopher and Schulman, John},
  month = nov,
  year = {2021},
  note = {arXiv:2110.14168 [cs]},
  keywords = {Computer Science - Computation and Language, Computer Science - Machine Learning},
}

@article{hendrycks_measuring_2021,
  title = {Measuring {Mathematical} {Problem} {Solving} {With} the {MATH} {Dataset}},
  url = {http://arxiv.org/abs/2103.03874},
  doi = {10.48550/arXiv.2103.03874},

  language = {en},
  urldate = {2025-09-20},
  publisher = {NeurIPS (Proceedings)},
  author = {Hendrycks, Dan and Burns, Collin and Kadavath, Saurav and Arora, Akul and Basart, Steven and Tang, Eric and Song, Dawn and Steinhardt, Jacob},
  month = nov,
  year = {2021},
  note = {NeurIPS 2021; arXiv:2103.03874 [cs]},
  keywords = {Computer Science - Artificial Intelligence, Computer Science - Computation and Language, Computer Science - Machine Learning},
}

@misc{patel_are_2021,
  title = {Are {NLP} {Models} really able to {Solve} {Simple} {Math} {Word} {Problems}?},
  url = {http://arxiv.org/abs/2103.07191},
  doi = {10.48550/arXiv.2103.07191},

  language = {en},
  urldate = {2025-09-20},
  publisher = {arXiv},
  author = {Patel, Arkil and Bhattamishra, Satwik and Goyal, Navin},
  month = apr,
  year = {2021},
  note = {arXiv:2103.07191 [cs]},
  keywords = {Computer Science - Computation and Language},
}

@inproceedings{miao_diverse_2020,
  address = {Online},
  title = {A {Diverse} {Corpus} for {Evaluating} and {Developing} {English} {Math} {Word} {Problem} {Solvers}},
  url = {https://www.aclweb.org/anthology/2020.acl-main.92},
  doi = {10.18653/v1/2020.acl-main.92},

  language = {en},
  urldate = {2025-09-20},
  booktitle = {Proceedings of the 58th Annual Meeting of the Association for Computational Linguistics},
  publisher = {Association for Computational Linguistics},
  author = {Miao, Shen-yun and Liang, Chao-Chun and Su, Keh-Yih},
  year = {2020},
  pages = {975--984},
}

@article{paster_openwebmath_2024,
  title = {{OPENWEBMATH}: {AN} {OPEN} {DATASET} {OF} {HIGH}-{QUALITY} {MATHEMATICAL} {WEB} {TEXT}},

  language = {en},
  author = {Paster, Keiran and Santos, Marco Dos and Azerbayev, Zhangir},
  year = {2024},
}

@misc{zheng_group_2025,
  title = {Group {Sequence} {Policy} {Optimization}},
  url = {http://arxiv.org/abs/2507.18071},
  doi = {10.48550/arXiv.2507.18071},

  language = {en},
  urldate = {2025-09-22},
  publisher = {arXiv},
  author = {Zheng, Chujie and Liu, Shixuan and Li, Mingze and Chen, Xiong-Hui and Yu, Bowen and Gao, Chang and Dang, Kai and Liu, Yuqiong and Men, Rui and Yang, An and Zhou, Jingren and Lin, Junyang},
  month = jul,
  year = {2025},
  note = {arXiv:2507.18071 [cs]},
  keywords = {Computer Science - Artificial Intelligence, Computer Science - Computation and Language, Computer Science - Machine Learning},
}

@misc{deepseek-ai_deepseek-r1_2025,
  title = {{DeepSeek}-{R1}: {Incentivizing} {Reasoning} {Capability} in {LLMs} via {Reinforcement} {Learning}},
  shorttitle = {{DeepSeek}-{R1}},
  url = {http://arxiv.org/abs/2501.12948},
  doi = {10.48550/arXiv.2501.12948},

  language = {en},
  urldate = {2025-09-22},
  publisher = {arXiv},
  author = {DeepSeek-AI and Guo, Daya and Yang, Dejian and Zhang, Haowei and Song, Junxiao and Zhang, Ruoyu and Xu, Runxin and Zhu, Qihao and Ma, Shirong and Wang, Peiyi and Bi, Xiao and Zhang, Xiaokang and Yu, Xingkai and Wu, Yu and Wu, Z. F. and Gou, Zhibin and Shao, Zhihong and Li, Zhuoshu and Gao, Ziyi and Liu, Aixin and Xue, Bing and Wang, Bingxuan and Wu, Bochao and Feng, Bei and Lu, Chengda and Zhao, Chenggang and Deng, Chengqi and Zhang, Chenyu and Ruan, Chong and Dai, Damai and Chen, Deli and Ji, Dongjie and Li, Erhang and Lin, Fangyun and Dai, Fucong and Luo, Fuli and Hao, Guangbo and Chen, Guanting and Li, Guowei and Zhang, H. and Bao, Han and Xu, Hanwei and Wang, Haocheng and Ding, Honghui and Xin, Huajian and Gao, Huazuo and Qu, Hui and Li, Hui and Guo, Jianzhong and Li, Jiashi and Wang, Jiawei and Chen, Jingchang and Yuan, Jingyang and Qiu, Junjie and Li, Junlong and Cai, J. L. and Ni, Jiaqi and Liang, Jian and Chen, Jin and Dong, Kai and Hu, Kai and Gao, Kaige and Guan, Kang and Huang, Kexin and Yu, Kuai and Wang, Lean and Zhang, Lecong and Zhao, Liang and Wang, Litong and Zhang, Liyue and Xu, Lei and Xia, Leyi and Zhang, Mingchuan and Zhang, Minghua and Tang, Minghui and Li, Meng and Wang, Miaojun and Li, Mingming and Tian, Ning and Huang, Panpan and Zhang, Peng and Wang, Qiancheng and Chen, Qinyu and Du, Qiushi and Ge, Ruiqi and Zhang, Ruisong and Pan, Ruizhe and Wang, Runji and Chen, R. J. and Jin, R. L. and Chen, Ruyi and Lu, Shanghao and Zhou, Shangyan and Chen, Shanhuang and Ye, Shengfeng and Wang, Shiyu and Yu, Shuiping and Zhou, Shunfeng and Pan, Shuting and Li, S. S. and Zhou, Shuang and Wu, Shaoqing and Ye, Shengfeng and Yun, Tao and Pei, Tian and Sun, Tianyu and Wang, T. and Zeng, Wangding and Zhao, Wanjia and Liu, Wen and Liang, Wenfeng and Gao, Wenjun and Yu, Wenqin and Zhang, Wentao and Xiao, W. L. and An, Wei and Liu, Xiaodong and Wang, Xiaohan and Chen, Xiaokang and Nie, Xiaotao and Cheng, Xin and Liu, Xin and Xie, Xin and Liu, Xingchao and Yang, Xinyu and Li, Xinyuan and Su, Xuecheng and Lin, Xuheng and Li, X. Q. and Jin, Xiangyue and Shen, Xiaojin and Chen, Xiaosha and Sun, Xiaowen and Wang, Xiaoxiang and Song, Xinnan and Zhou, Xinyi and Wang, Xianzu and Shan, Xinxia and Li, Y. K. and Wang, Y. Q. and Wei, Y. X. and Zhang, Yang and Xu, Yanhong and Li, Yao and Zhao, Yao and Sun, Yaofeng and Wang, Yaohui and Yu, Yi and Zhang, Yichao and Shi, Yifan and Xiong, Yiliang and He, Ying and Piao, Yishi and Wang, Yisong and Tan, Yixuan and Ma, Yiyang and Liu, Yiyuan and Guo, Yongqiang and Ou, Yuan and Wang, Yuduan and Gong, Yue and Zou, Yuheng and He, Yujia and Xiong, Yunfan and Luo, Yuxiang and You, Yuxiang and Liu, Yuxuan and Zhou, Yuyang and Zhu, Y. X. and Xu, Yanhong and Huang, Yanping and Li, Yaohui and Zheng, Yi and Zhu, Yuchen and Ma, Yunxian and Tang, Ying and Zha, Yukun and Yan, Yuting and Ren, Z. Z. and Ren, Zehui and Sha, Zhangli and Fu, Zhe and Xu, Zhean and Xie, Zhenda and Zhang, Zhengyan and Hao, Zhewen and Ma, Zhicheng and Yan, Zhigang and Wu, Zhiyu and Gu, Zihui and Zhu, Zijia and Liu, Zijun and Li, Zilin and Xie, Ziwei and Song, Ziyang and Pan, Zizheng and Huang, Zhen and Xu, Zhipeng and Zhang, Zhongyu and Zhang, Zhen},
  month = jan,
  year = {2025},
  note = {arXiv:2501.12948 [cs]},
  keywords = {Computer Science - Artificial Intelligence, Computer Science - Computation and Language, Computer Science - Machine Learning},
}

@misc{zhuang_technical_2025,
  title = {A {Technical} {Study} into 0.{5B} {Reasoning} {Language} {Models}},
  url = {http://arxiv.org/abs/2506.13404},
  doi = {10.48550/arXiv.2506.13404},

  language = {en},
  urldate = {2025-09-22},
  publisher = {arXiv},
  author = {Zhuang, Xialie and Ma, Peixian and Jia, Zhikai and Liu, Shiwei and Cao, Zheng},
  month = jun,
  year = {2025},
  note = {arXiv:2506.13404 [cs]},
  keywords = {Computer Science - Artificial Intelligence},
}

@misc{luong_reft_2024,
  title = {{ReFT}: {Reasoning} with {Reinforced} {Fine}-{Tuning}},
  shorttitle = {{ReFT}},
  url = {http://arxiv.org/abs/2401.08967},
  doi = {10.48550/arXiv.2401.08967},

  language = {en},
  urldate = {2025-09-23},
  publisher = {arXiv},
  author = {Luong, Trung Quoc and Zhang, Xinbo and Jie, Zhanming and Sun, Peng and Jin, Xiaoran and Li, Hang},
  month = dec,
  year = {2024},
  note = {arXiv:2401.08967 [cs]},
  keywords = {Computer Science - Computation and Language},
}

@misc{mitra_motif_2025,
  title = {{MOTIF}: {Modular} {Thinking} via {Reinforcement} {Fine}-tuning in {LLMs}},
  shorttitle = {{MOTIF}},
  url = {http://arxiv.org/abs/2507.02851},
  doi = {10.48550/arXiv.2507.02851},

  language = {en},
  urldate = {2025-09-23},
  publisher = {arXiv},
  author = {Mitra, Purbesh and Ulukus, Sennur},
  month = jul,
  year = {2025},
  note = {arXiv:2507.02851 [cs]},
  keywords = {Computer Science - Artificial Intelligence, Computer Science - Computation and Language, Computer Science - Information Theory, Computer Science - Machine Learning, Computer Science - Systems and Control, Electrical Engineering and Systems Science - Systems and Control, Mathematics - Information Theory},
}

@inproceedings{brown_language_2020,
  title = {Language {Models} are {Few}-{Shot} {Learners}},

  language = {en},
  author = {Brown, Tom B. and Mann, Benjamin and Ryder, Nick and Subbiah, Melanie and Kaplan, Jared and Dhariwal, Prafulla and Neelakantan, Arvind and Shyam, Pranav and Sastry, Girish and Askell, Amanda and Agarwal, Sandhini and Herbert-Voss, Ariel and Krueger, Gretchen and Henighan, Tom and Child, Rewon and Ramesh, Aditya and Ziegler, Daniel M. and Wu, Jeffrey and Winter, Clemens and Hesse, Christopher and Chen, Mark and Sigler, Eric and Litwin, Mateusz and Gray, Scott and Chess, Benjamin and Clark, Jack and Berner, Christopher and McCandlish, Sam and Radford, Alec and Sutskever, Ilya and Amodei, Dario},
  year = {2020},
  booktitle = {Advances in Neural Information Processing Systems 33 (NeurIPS 2020)},
  url = {https://arxiv.org/abs/2005.14165},
  doi = {10.48550/arXiv.2005.14165},
}

@article{fu_specializing_2023,
  title = {Specializing {Smaller} {Language} {Models} towards {Multi}-{Step} {Reasoning}},

  language = {en},
  year = {2023},
  author = {Fu, Yao and Peng, Hao and Ou, Litu and Sabharwal, Ashish and Khot, Tushar},
}

@misc{qwen2.5,
  title = {Qwen2.5: A Party of Foundation Models},
  url = {https://qwenlm.github.io/blog/qwen2.5/},
  author = {{Qwen Team}},
  month = {September},
  year = {2024}
}

@misc{trl_grpo_docs,
  title = {GRPO Trainer (TRL documentation)},
  author = {{Hugging Face}},
  howpublished = {\url{https://huggingface.co/docs/trl/main/en/grpo_trainer}},
  note = {Accessed: 2025-09-22}
}

@misc{normaluhr2025grpo,
  author = {NormalUhr},
  title = {From GRPO to DAPO and GSPO: What, Why, and How},
  year = {2025},
  month = aug,
  day = {9},
  howpublished = {\url{https://huggingface.co/blog/NormalUhr/grpo-to-dapo-and-gspo}},
  note = {Accessed: 2025-09-25},
  publisher = {Hugging Face}
}

@article{roux_tapered_2025,
  title = {{TAPERED} {OFF}-{POLICY} {REINFORCE} {Stable} and efficient reinforcement learning for {LLMs}},

  language = {en},
  year = {2025},
  author = {Roux, Nicolas Le and Bellemare, Marc G},
}

@misc{gao_soft_2025,
  title = {Soft {Adaptive} {Policy} {Optimization}},
  url = {http://arxiv.org/abs/2511.20347},
  doi = {10.48550/arXiv.2511.20347},

  language = {en},
  urldate = {2026-01-28},
  publisher = {arXiv},
  author = {Gao, Chang and Zheng, Chujie and Chen, Xiong-Hui and Dang, Kai and Liu, Shixuan and Yu, Bowen and Yang, An and Bai, Shuai and Zhou, Jingren and Lin, Junyang},
  month = dec,
  year = {2025},
  note = {arXiv:2511.20347 [cs]},
  keywords = {Computer Science - Artificial Intelligence, Computer Science - Computation and Language, Computer Science - Machine Learning},
}

@article{chen_sufficiency_2023,
  title = {The {Sufficiency} of {Off}-{Policyness} and {Soft} {Clipping}: {PPO} {Is} {Still} {Insufficient} according to an {Off}-{Policy} {Measure}},
  volume = {37},
  issn = {2374-3468, 2159-5399},
  shorttitle = {The {Sufficiency} of {Off}-{Policyness} and {Soft} {Clipping}},
  url = {https://ojs.aaai.org/index.php/AAAI/article/view/25864},
  doi = {10.1609/aaai.v37i6.25864},

  language = {en},
  number = {6},
  urldate = {2026-01-28},
  journal = {Proceedings of the AAAI Conference on Artificial Intelligence},
  author = {Chen, Xing and Diao, Dongcui and Chen, Hechang and Yao, Hengshuai and Piao, Haiyin and Sun, Zhixiao and Yang, Zhiwei and Goebel, Randy and Jiang, Bei and Chang, Yi},
  month = jun,
  year = {2023},
  pages = {7078--7086},
}

@article{glaese_improving_nodate,
	title = {Improving alignment of dialogue agents via targeted human judgements},
    year={2022},
	language = {en},
	author = {Glaese, Amelia and McAleese, Nat and Trebacz, Maja and Aslanides, John and Firoiu, Vlad and Ewalds, Timo and Rauh, Maribeth and Weidinger, Laura and Chadwick, Martin and Thacker, Phoebe and Campbell-Gillingham, Lucy and Uesato, Jonathan and Huang, Po-Sen and Comanescu, Ramona and Yang, Fan and See, Abigail and Dathathri, Sumanth and Greig, Rory and Chen, Charlie and Fritz, Doug and Elias, Jaume Sanchez and Green, Richard and Mokrá, Soňa and Fernando, Nicholas and Wu, Boxi and Foley, Rachel and Young, Susannah and Gabriel, Iason and Isaac, William and Mellor, John and Hassabis, Demis and Kavukcuoglu, Koray and Hendricks, Lisa Anne and Irving, Geoffrey},
}

@article{yang2024qwen2,
  title={Qwen2 technical report},
  author={Yang, An and Yang, Baosong and Hui, Binyuan and Zheng, Bo and Yu, Bowen and Zhou, Chang and Li, Chengpeng and Li, Chengyuan and Liu, Dayiheng and Huang, Fei and others},
  journal={arXiv preprint arXiv:2407.10671},
  year={2024}
}
\bibliographystyle{icml2026/icml2026}

%%%%%%%%%%%%%%%%%%%%%%%%%%%%%%%%%%%%%%%%%%%%%%%%%%%%%%%%%%%%%%%%%%%%%%%%%%%%%%%
%%%%%%%%%%%%%%%%%%%%%%%%%%%%%%%%%%%%%%%%%%%%%%%%%%%%%%%%%%%%%%%%%%%%%%%%%%%%%%%
% APPENDIX
%%%%%%%%%%%%%%%%%%%%%%%%%%%%%%%%%%%%%%%%%%%%%%%%%%%%%%%%%%%%%%%%%%%%%%%%%%%%%%%
%%%%%%%%%%%%%%%%%%%%%%%%%%%%%%%%%%%%%%%%%%%%%%%%%%%%%%%%%%%%%%%%%%%%%%%%%%%%%%%
\newpage
\appendix
\onecolumn
\section{Training Details}
\label{app:tuning}
The training for each method was run on 1 GPU. We used python version 3.10.18, and trl=0.25.1. We save checkpoint models every 1000 steps, and return the best model checkpoint at the end of training.

We used Nvidia GPUs (A100) and AMD GPUs (MI300X) for training, using only 1 GPU at a time. It took approximately 18 hours per training run for the 150M tokens, and around 10 hours for the 75M token training runs.

\subsection{Tuning Method}
Each method was independently tuned using a grid search over learning rate and any parameter related to the trust region (i.e. clipping range, smoothing alpha value, tau values).

We ran the initial grid search using seed 42, for 500 max global steps. % (hyperband argument for why it can be shorter?). 
The top 5 configurations were then run again on seeds 43 and 44. The best average validation reward from this was selected for the full training runs. These seeds are different to those used during training (seeds 0, 1 and 2 were used in training).

 We used the learning rates $= [5e-07, 1e-06, 5e-06]$, and the method specific values used are shown in Table \ref{tab:parameters_grid}.

\begin{table}[ht] 
\centering 
\small 

\caption{Hyperparameter grid search values for method specific parameters.}

\resizebox{\textwidth}{!}{% 
\begin{tabular}{p{2cm}p{3.5cm}lp{5cm}} 
\toprule 
Method & Trust Region Parameter & Grid Search Values & Notes \\ 
\midrule
GRPO-clip & clipping range, $\varepsilon$ & [0.1, 0.2, 0.3, 0.4] & \\ 
GRPO-noclip & N/A & N/A & \\ 
P3O & temperature, $\tau$ & [1.0, 2.0, 4.0, 10.0] & \cite{chen_sufficiency_2023} suggested $\tau = 4.0$ \\ 
\multirow{2}{=}{SAPO} & temperature for positive advantages, $\tau_{positive}$ & [1.0, 2.0, 3.0] & \multirow{2}{=}{\cite{gao_soft_2025} suggested that $\tau_{negative} > \tau_{positive}$ gave the most stable training, so we only considered configurations where $\tau_{negative} \geq \tau_{positive}$.}\\ 
 & temperature for negative advantages, $\tau_{negative}$ & {[}1.0, 2.0, 3.0{]} & \\[3.5em]  % <-- add space here
\multirow{2}{=}{TOPR} & truncation limit for positive advantages, $b^+$ & {[}1.0, 2.0, 10.0{]} & \multirow{2}{=}{\cite{roux_tapered_2025} suggested $a^- = 0$ and $a^+ = 1$; we searched across $b^+$ and $b^-$ values (paper uses $b^+ = b^- = 1$).} \\ 
 & truncation limit for negative advantages, $b^-$ & [1.0, 2.0, 10.0] & \\[2em]  % <-- and here
GR-PSPO & smoothing alpha, $\alpha$ & [0.1, 0.2, 0.3, 0.4] & \\ 
\bottomrule
\end{tabular} 
} 
\label{tab:parameters_grid}
\end{table}

\subsection{Hyperparameter Values} % for Mathematics Reasoning Task}
\subsubsection{Qwen2.5-0.5B and Qwen2.5-1.5B}
For the Qwen2.5-0.5B and -1.5B models, we used the shared parameters shown in Table \ref{tab:shared}.
\begin{table}[ht]
\centering
\caption{Shared training hyperparameters for the 0.5B and 1.5B experiments.}
\label{tab:shared}
\begin{tabular}{ll}
\toprule
Parameter & Value \\
\midrule
Effective batch size & 128 (16 per device $\times$ 8 accumulation steps) \\
Generations per prompt & 4 \\
KL penalty ($\beta$) & 0.0 \\
Max completion length & 128 tokens \\
Sampling temperature & 0.6 \\
Top-$p$ & 0.85 \\
Warmup steps & 125 \\
Precision & bfloat16 \\
\bottomrule
\end{tabular}
\end{table}

The method specific parameters are shown in Table \ref{tab:hyperparams05}.

\begin{table}[ht]
\centering
\caption{Hyperparameters for each trust region method for Qwen2.5-0.5B and -1.5B. All methods use the DAPO loss unless otherwise noted.}
\label{tab:hyperparams05}
\begin{tabular}{lccccc}
\toprule
Method & Iterations & Learning Rate & \multicolumn{3}{c}{Trust Region Parameters} \\
\midrule
GRPO-noclip & 1 & 1e-6 & \multicolumn{3}{c}{---} \\
GRPO-clip & 2 & 1e-6 & \multicolumn{3}{c}{$\varepsilon = 0.4$} \\
GR-PSPO & 2 & 1e-6 & \multicolumn{3}{c}{$\alpha = 0.1$} \\
P3O & 2 & 1e-6 & \multicolumn{3}{c}{$\tau = 4.0$} \\
SAPO\textsuperscript{\dag} & 2 & 1e-6 & \multicolumn{3}{c}{$\tau_{+} = 1.0$, $\tau_{-} = 3.0$} \\
TOPR\textsuperscript{\dag} & 2 & 1e-6 & \multicolumn{3}{c}{$b^{+} = 1.0$, $b^{-} = 1.0$} \\
\bottomrule
\end{tabular}

\smallskip
{\footnotesize \textsuperscript{\dag}Uses GRPO-style loss to match existing implementations.}
\end{table}

\subsubsection{Qwen2-Math-1.5B}
For the Qwen2-Math-1.5B model, we used the shared parameters shown in Table \ref{tab:shared}.

The method specific parameters are shown in Table \ref{tab:hyperparams15}.

\begin{table}[ht]
\centering
\caption{Hyperparameters for each trust region method for Qwen2-Math-1.5B. All methods use the DAPO loss unless otherwise noted.}
\label{tab:hyperparams15}
\begin{tabular}{lccccc}
\toprule
Method & Iterations & Learning Rate & \multicolumn{3}{c}{Trust Region Parameters} \\
\midrule
GRPO-noclip & 1 & 5e-6 & \multicolumn{3}{c}{---} \\
GRPO-clip & 2 & 5e-6 & \multicolumn{3}{c}{$\varepsilon = 0.2$} \\
GR-PSPO & 2 & 1e-6 & \multicolumn{3}{c}{$\alpha = 0.4$} \\
P3O & 2 & 1e-6 & \multicolumn{3}{c}{$\tau = 4.0$} \\
SAPO\textsuperscript{\dag} & 2 & 1e-6 & \multicolumn{3}{c}{$\tau_{+} = 1.0$, $\tau_{-} = 3.0$} \\
TOPR\textsuperscript{\dag} & 2 & 1e-6 & \multicolumn{3}{c}{$b^{+} = 1.0$, $b^{-} = 1.0$} \\
\bottomrule
\end{tabular}

\smallskip
{\footnotesize \textsuperscript{\dag}Uses GRPO-style loss to match existing implementations.}
\end{table}

% \subsection{Hyperparameter Values for Summarisation Task}
% \subsubsection{0.5B}

% \subsubsection{1.5B}

% \section{You \emph{can} have an appendix here.}

% You can have as much text here as you want. The main body must be at most $8$
% pages long. For the final version, one more page can be added. If you want, you
% can use an appendix like this one.

% The $\mathtt{\backslash onecolumn}$ command above can be kept in place if you
% prefer a one-column appendix, or can be removed if you prefer a two-column
% appendix.  Apart from this possible change, the style (font size, spacing,
% margins, page numbering, etc.) should be kept the same as the main body.
% %%%%%%%%%%%%%%%%%%%%%%%%%%%%%%%%%%%%%%%%%%%%%%%%%%%%%%%%%%%%%%%%%%%%%%%%%%%%%%%
% %%%%%%%%%%%%%%%%%%%%%%%%%%%%%%%%%%%%%%%%%%%%%%%%%%%%%%%%%%%%%%%%%%%%%%%%%%%%%%%

\section{Sensitivity to Smoothing Alpha}
\label{app:alpha}
We ablate the smoothing parameter $\alpha$ on Qwen2-Math-1.5B with a learning rate of 1e-6 for 250 global steps. Performance peaks around $\alpha$=0.4, with degradation at both extremes, which could support our theory that insufficient smoothing ($\alpha$=0.1) provides weak regularisation, while excessive smoothing ($\alpha$=0.5) over-constrains updates. Table \ref{tab:alpha} shows the mean validation reward at the end of the 250 steps across different values of $\alpha$.

\begin{table}[ht]

\caption{Small ablation across smoothing alpha, $\alpha$, values on Qwen2-Math-1.5B.}
\centering
\begin{tabular}{cc}
\toprule
$\alpha$ & Mean Validation Reward \\
\midrule
0.1      & 0.170                  \\
0.2      & 0.276                  \\
0.4      & 0.319                  \\
0.5      & 0.240              \\
\bottomrule
\end{tabular}
\label{tab:alpha}
\end{table}

\end{document}